\newcommand{\EE}{\mathbb{E}}
\newcommand{\RR}{\mathbb{R}}
\newcommand{\DD}{\mathcal{D}}
\newcommand{\algparbox}[1]{\parbox[t]{\dimexpr\linewidth-\algorithmicindent}{#1\strut}}
\def\argmax{\mathop{\rm arg\,max}}%
\def\argmin{\mathop{\rm arg\,min}}%
\newtheorem{proposition}{Proposition}
\newtheorem{theorem}{Theorem}
\newtheorem{lemma}{Lemma}
\newtheorem{corollary}{Corollary}
\title{Learning to Be Fair: A Consequentialist Approach\\ to Equitable Decision-Making}
\author[1]{Alex Chohlas-Wood}
\author[1]{Madison Coots}
\author[2]{Henry Zhu}
\author[2]{\\Emma Brunskill}
\author[1]{Sharad Goel}
\affil[1]{Kennedy School, Harvard University}
\affil[2]{Computer Science, Stanford University}
\date{}
\begin{document}
\maketitle 

\begin{abstract}
\noindent
In an attempt to make algorithms \emph{fair}, the machine learning literature has
largely focused on equalizing decisions, outcomes, or error rates across race or gender groups.
To illustrate, consider a hypothetical government rideshare program that provides transportation assistance to low-income people with upcoming court dates.
Following this literature, one might allocate rides to those with the highest estimated treatment effect per dollar, 
while constraining spending to be equal across race groups.
That approach, however, ignores the downstream consequences of such constraints, and, as a result, can induce unexpected harms.
For instance, if one demographic group lives farther from court, enforcing equal spending would necessarily mean fewer total rides provided, and potentially more people penalized for missing court.
Here we present an alternative framework for designing equitable algorithms that foregrounds the consequences of decisions.
In our approach, one first elicits stakeholder preferences over the space of possible decisions and the resulting outcomes---such as preferences for balancing spending parity against court appearance rates.
We then optimize over the space of decision policies,
making trade-offs in a way that maximizes the elicited utility.
To do so, we develop an algorithm for efficiently learning these optimal policies 
from data for a large family of expressive utility functions.
In particular, we use a contextual bandit algorithm to explore the space of policies while 
solving a convex optimization problem at each step to estimate the best policy based on the available information.
This consequentialist paradigm facilitates a more holistic approach to equitable decision-making.
\end{abstract}

\section{Introduction}
\label{intro}
Statistical predictions are now used to inform high-stakes decisions in a wide variety of domains.
For example, in banking, loan decisions are based in part on estimated risk of default~\citep{leo2019machine};
in criminal justice, judicial bail decisions are based on estimated risk of recidivism~\citep{cadigan2011implementing,latessa2010creation,goel2018accuracy,milgram2014pretrial};
in healthcare, algorithms identify which individuals receive limited resources, including HIV prevention counseling and kidney replacements~\citep{wilder_2021,friedewald2013kidney,obermeyer2019dissecting};
and in child services, screening decisions are based on the estimated risk of adverse outcomes~\citep{brown2019toward,chouldechova2018case,de2020case,shroff2017predictive}.
In these applications and others, equity is a central concern.
In particular, the machine learning community has proposed numerous methods to constrain predictions to achieve formal statistical properties,
such as parity in decision rates or error rates
across demographic groups~\citep{barocas2017fairness,chouldechova2018frontiers,corbett2023measure,chohlas2023designing}.

To illustrate this traditional approach to designing equitable algorithms, 
consider a government agency that provides free rides for people to get to court~\citep{brough2022transportation}.
Missed court dates can lead to severe penalties, 
including incarceration,
and so improving court appearance rates can reduce social harms~\citep{chohlas2023automated}.
When designing this program,
one might first use historical data to estimate the effect of a ride on increasing each person's likelihood of appearing at court, as well as the cost of providing them with a ride.
Then, in an effort to distribute benefits fairly, 
one might allocate assistance to those with the highest estimated benefit per dollar 
while constraining per-person spending to be equal across demographic groups.
The implicit hope in past literature is that one achieves fairness by
imposing an axiomatic constraint on decisions: spending parity.

Although intuitively reasonable, axiomatic approaches to fairness can cause unexpected harms.
For example, imagine members of one group live farther from the courthouse, making it more costly to provide them rides.
Enforcing equal spending across groups would typically result in fewer rides overall, 
and accordingly lower appearance rates. 
More generally, 
traditional axiomatic approaches to fairness typically do not consider the downstream consequences of constraints, 
and thus fail to engage with the difficult trade-offs at the heart of many policy problems.

We propose an alternative, consequentialist framework to algorithmic fairness. 
In this framework, rather than imposing fairness axioms, one begins by eliciting stakeholder preferences over the space of potential decisions and resulting outcomes.
For example, 
in designing our hypothetical transportation program,
one would assess the degree to which 
stakeholders are willing to trade court appearances for reductions in spending disparities across groups.
Then, using these preferences, we compute a decision-making policy with the largest expected utility while adhering to budget constraints. 
Given historical data on decisions and outcomes, 
we show that optimal decision policies can be efficiently derived for a large and expressive family of utility functions 
by solving a linear program (LP).

We further show how to efficiently learn optimal policies while rolling out new programs in the absence of historical data.
Our approach here is inspired by the success of Thompson Sampling~\citep{chapelle2011empirical} and optimism-under-uncertainty methods~\citep{auer2002finite} in multi-armed bandits. 
In contrast to the standard contextual multi-armed bandit setting, we consider a multifaceted, structured objective to account for complex preferences and budget constraints inherent to many real-world applications. 
As such, our actions at each iteration are guided by solving an LP as above.

The rest of our paper is structured as follows.
In Section~\ref{sec:related} we review the related literature, connecting and contrasting our approach to ideas in fair machine learning, fair division, multi-objective optimization, and reinforcement learning.
In Section~\ref{sec:selecting-policies} we illustrate the trade-offs inherent to many policy problems---and the concomitant benefits of a consequentialist perspective over an axiomatic approach.
To do so, we draw on client data from the Santa Clara 
County Public Defender Office to consider the costs and benefits of a hypothetical transportation assistance program.
We also describe the results of a survey that 
gauged stakeholders' willingness to sacrifice court appearances to reduce spending disparities across race groups.
Given such preferences, as well as historical data on outcomes, in Section~\ref{sec:problem} we formally state and solve the corresponding policy optimization problem.
In Section~\ref{sec:bounds}, we theoretically derive sample complexity bounds on learning optimal policies in the absence of historical data.
Finally, in Section~\ref{sec:online_learning},
we introduce and evaluate an adaptive approach to learning optimal policies, 
combining contextual bandits with the optimization solution described in Section~\ref{sec:problem}.
We end with some concluding thoughts in Section~\ref{sec:discussion}.

\section{Related Work}
\label{sec:related}

Our work draws on research in algorithmic fairness, fair division, multi-objective optimization, and contextual bandits with budgets---connections that we briefly discuss below.

Over the last several years, there has been increased attention on designing equitable machine learning systems~\citep{buolamwini2018gender,raji2019actionable,blodgett17,caliskan,de2019bias,ali2019discrimination,datta2018discrimination,obermeyer2019dissecting,goodman2018machine,chouldechova2018case,koenecke2020racial,shroff2017predictive,chohlas2023designing}, 
and associated development of formal criteria to characterize fairness~\citep{barocas2017fairness,chouldechova2018frontiers,corbett2023measure,gupta2020too}.
Some of the most popular definitions demand parity in predictions across salient demographic groups,
including parity in mean predictions~\citep{feldman2015certifying} or error rates~\citep{hardt2016equality}.
Another class of fairness definitions aims to blind algorithms to protected characteristics, including through their proxies~\citep{kilbertus2017avoiding,wang2019equal,coston2020counterfactual,kusner2017counterfactual,nabi2018fair,zhang2018fairness,chiappa2018causal,wu2019pc,nyarkobreaking,nilforoshan2022}.

All the above approaches conceptualize the equity of algorithmic decisions in terms of universal rules 
(e.g., error rate parity)
rather than considering the consequences of decisions.
Recent work has noted limitations to this axiomatic approach, 
which has otherwise dominated the fair machine learning literature~\citep{cowgill2019economics,cowgill2020algorithmic,corbett2017algorithmic,kasy_2021,grgichlaca2022}.
Some recent exceptions have begun to consider algorithmic decision-making from a consequentialist perspective~\citep{liu2018delayed,viviano2023fair,fang2022fairness,donahue2020fairness,coston2020counterfactual,nilforoshan2022,card2020consequentialism,barabas2018interventions}.
For example, \citet{nilforoshan2022} show that common causal definitions of algorithmic fairness lead to Pareto-dominated policies.
However, although these papers adopt a consequentialist approach to varying degrees, they do not consider the problem of efficiently learning optimal policies, as we do here.

In a related thread of research on fair division problems, groups of individuals decide how to split a limited set of resources among themselves~\citep{bertsimas2011,gal2017fairest,caragiannis2012efficiency,brams1996fair}.
The broad aim of that work---to equitably allocate a limited resource---is similar to our own, but it differs in three important respects.
First, canonical fair division problems seek to arbitrate between individuals with competing preferences (e.g., as in cake-cutting style problems~\citep{procaccia2013cake}), rather than adopting the preferences of a social planner, as we do.
Second, and relatedly, much of the fair division literature, like the algorithmic fairness literature, takes an axiomatic approach to fairness,
identifying allocations that have properties posited to be desirable, such as envy-freeness~\citep{cohler2011optimal}. 
Although that perspective is useful in many applications, it does not explicitly consider the preferences of policymakers, which may be incompatible with these axiomatic constraints.
Finally, work on fair division problems typically does not try to learn causal effects of allocations on downstream outcomes from data, such as the heterogeneous effect of transportation assistance on appearance rates.

In many real-world settings, decision makers have competing priorities, 
linking our work to the large literature on learning to optimize in multi-objective environments~\citep{zuluaga2013active}.
Such inherent trade-offs have recently been considered in the fair machine learning community (e.g.,~\citet{corbett2017algorithmic,cai2020fair,rolf2020balancing}); however, there has been little work on creating equitable learning systems that account for competing objectives. 
Relatedly, a large and growing body of work has shown that one can often efficiently elicit preferences for complex objectives, even 
in high-dimensional outcome spaces~\citep{linpreference,furnkranz2010preference, chu2005preference}.

One particularly challenging aspect of our setting is handling budget constraints (e.g., we may only be able to provide transportation assistance to a limited number of clients). 
Recent work has proposed methods for learning 
decision policies with fairness or safety constraints through
reinforcement learning~\citep{thomas2019preventing} and contextual bandit algorithms~\citep{metevier2019offline}, given access to a batch of prior data. 
That work, however, neither addresses learning with budget constraints nor handles the exploration-exploitation trade-off required for online learning. 
A related study~\citep{patil2021achieving} on online multi-armed bandits considered minimizing regret while ensuring that each arm is played a minimal number of times,
but did not consider context-specific decision policies and fairness in resource allocations or budget constraints,
as we do here.
Budget constraints have been considered in a more general form of knapsack constraints in bandit settings. 
\citet[Ch. 10]{slivkins2019introduction} provides a review of such work, focusing on the primary literature, which has considered the (non-contextual) multi-armed bandit setting. 
Earlier work on  contextual multi-armed bandits with knapsacks~\citep{badanidiyuru2014resourceful, agrawal2016efficient} provided regret bounds but lacked computationally efficient implementations. \citet{agrawal2016near} later proved  regret guarantees for  linear contextual bandit with  knapsacks. 
\citet{wu2015algorithms} provide a computationally tractable, approximate linear programming method for online learning for contextual bandits with budget constraints. They do not consider multi-objective optimization, and their analysis and experiments do not address continuous or large state spaces, which make their work less applicable for equitable decision making in many settings of interest.

\section{Selecting Policies in the Presence of Trade-Offs}
\label{sec:selecting-policies}

We begin, in Section~\ref{sec:motivation}, by describing our motivating example of providing transportation to individuals with mandatory court dates. Using client data from the Santa Clara County Public Defender Office, we show that allocating benefits to maximize appearance rates induces spending disparities across race groups.
Then, in Section~\ref{sec:tradeoffs}, we continue by explicitly illustrating the inherent tension between maximizing appearance rates and equalizing spending---and arguing that popular axiomatic approaches to fairness can lead to unintended harms.
Finally, in Section~\ref{sec:survey}, we describe the results of a survey aimed at eliciting people's willingness to trade court appearances for lower spending disparities.

\subsection{Motivating Example}
\label{sec:motivation}

Consider the problem of allocating rideshare assistance to individuals who are required to attend mandatory court dates.
The consequences of missing a court date can be severe.
Often, after an individual misses a court appearance, judges will issue a ``bench warrant", which can lead to the individual's arrest at their next contact with law enforcement, 
and possibly weeks or months of jail time~\citep{fishbane2020behavioral,chohlas2023automated}.
Despite these consequences, some individuals struggle to attend court because of significant transportation barriers~\citep{mahoney2001pretrial,brough2022transportation,allen2022fta}. 
Government agencies---including public defender offices---may therefore aim to improve appearance rates by offering transportation assistance to and from court for a subset of these individuals with the greatest transportation needs.
This type of intervention has promise for improving appearance rates by alleviating transportation burdens many clients face, as has been demonstrated in medical settings~\citep{chaiyachati2018rideshare,vais2020rides,rand2021,lyft2020}.
As we discuss in Section~\ref{sec:discussion}, it is important to note that there are many alternative policy approaches to this issue, including discouraging judicial use of incarceration after an individual misses court.

A natural algorithmic approach for allocating rides is to prioritize those with the largest estimated treatment effect per dollar.
In particular, suppose we have access to a rich set of covariates, $X_i$, for each individual $i$, such as their age, alleged offense, and history of appearance. 
Based on these covariates, we could then estimate each individual's likelihood of appearance in the absence of assistance, $\hat{Y}_i(0)$, and their likelihood of appearance if provided with a ride, $\hat{Y}_i(1)$.
These probabilities might, for example, be estimated using historical data on past outcomes, or a randomized experiment.
Finally, we could sort individuals by $\rho_i = [ \hat{Y}_i(1) - \hat{Y}_i(0)] / c_i$, where $c_i$ is the cost of providing a ride to the $i$-th individual, and offer assistance to those with the highest values of $\rho_i$ until the budget is exhausted.

This strategy aims to achieve the highest appearance rate given the available budget. 
However, in so doing, it implicitly prioritizes those who live closest to the courthouse---for whom rides are typically less expensive---which could lead to unintended consequences.
For example, consider the Santa Clara County Public Defender Office (SCCPDO) in California, which represents tens of thousands of indigent clients every year.
Like many American jurisdictions, Santa Clara County, which includes San Jose, is geographically segregated by race (Figure \ref{fig:map}).
In particular, Santa Clara's Vietnamese population, one of the county's largest ethnic minorities, 
does not tend to live as close to the courthouse as other racial or ethnic groups,
including white individuals.

\begin{figure}[t]
	\centering
	\begin{subfigure}[t]{0.48\columnwidth}
		\centering
		\includegraphics[width=\columnwidth]{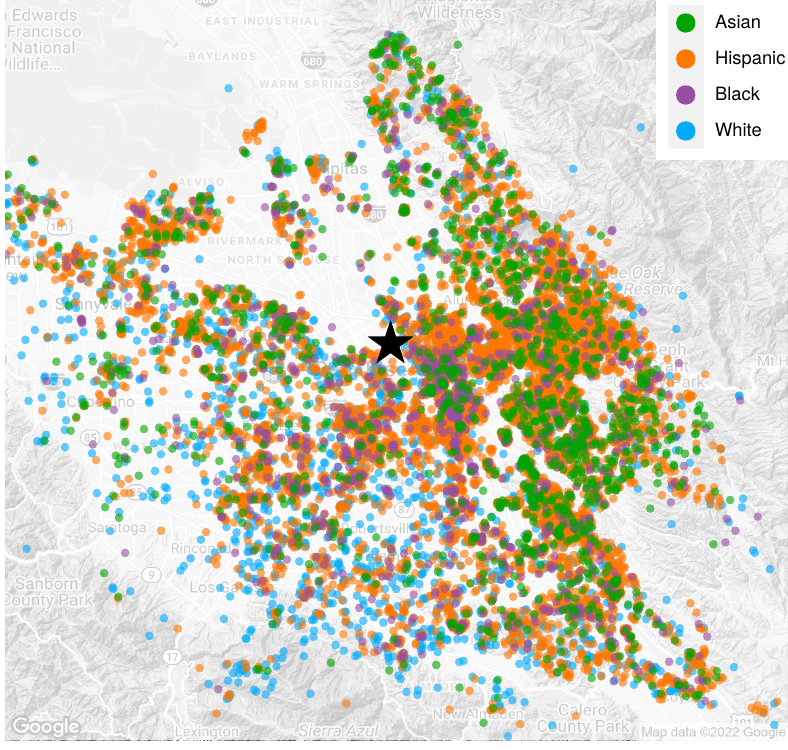}
		\caption{Santa Clara client locations. Each dot has been randomly perturbed to preserve privacy.}
		\label{fig:map}
	\end{subfigure}
	\quad
	\begin{subfigure}[t]{0.47\columnwidth}
		\centering
		\includegraphics[width=\columnwidth]{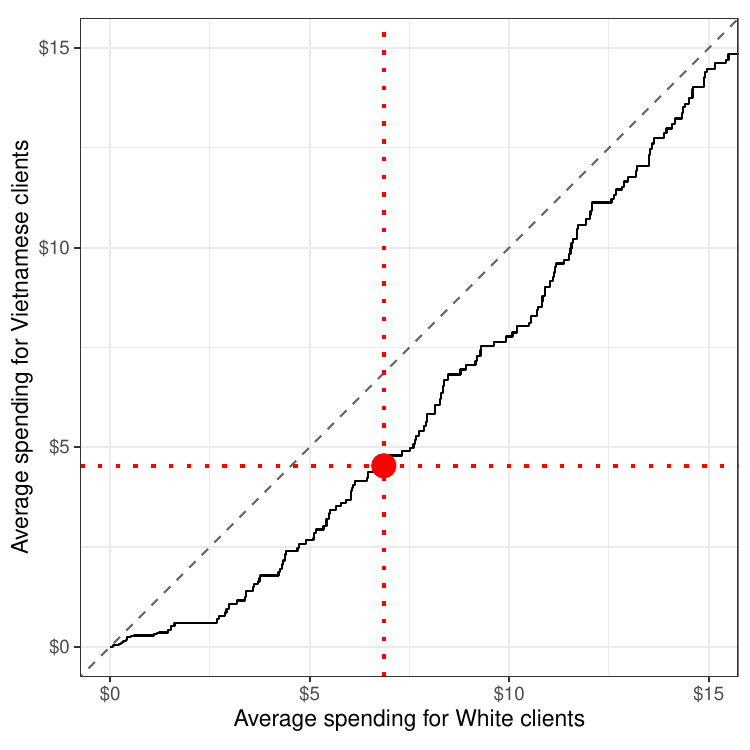}
		\caption{Average per-person spending for Vietnamese and white clients in the absence of parity constraints.}
		\label{fig:spending_curve}
	\end{subfigure}
	\vskip 0.1in
	\caption{
	The map in \subref{fig:map} shows the geographic distribution of the client base of the Santa Clara County Public Defender Office. 
    The star on the map marks the location of the main county courthouse, where most clients are required to appear for court appointments. 
    The plot in \subref{fig:spending_curve} explores the consequence of following a policy that provides rides to those with the highest estimated treatment effect per dollar without parity constraints.
    This policy 
    would result in higher average per-person spending for white individuals than for Vietnamese individuals.
    The red point shows that a hypothetical annual ride budget of \$50,000 would result in an average per-person 
    spending amount of \$6.86 for white individuals and an average per-person spending amount of \$4.54 for Vietnamese individuals.
    }
	\label{fig:map_and_spending_curve}
\end{figure}

To understand the impacts of a strategy that optimizes exclusively for appearance, we start with a dataset of 65,193 court dates handled by SCCPDO between January 1, 2017 and December 4, 2023. 
For the sake of consistency,
this population of court dates 
consists solely of clients' first court date after arraignment.
For clients with court dates after January 1, 2021, 
we use the historical data from 2017--2020 
to model $Y_i(0)$ with a logistic regression model 
based on age, race/ethnicity, offense severity (misdemeanor or felony), two-year appearance history, the day of the week and month of the court appearance, and the distance from the client's home to the courthouse.
For simplicity, we assume $Y_i(1) = 1$, meaning that all individuals who receive a ride attend court.
Finally, we assume rides cost \$5 per mile in each direction, 
in line with current rideshare prices.

Under the naive optimization approach outlined above, Figure~\ref{fig:spending_curve} shows per-capita spending for white and Vietnamese clients across different overall transportation budgets.
For example, given an annual budget of \$50,000,
a policy that allocates rides to those with the highest estimated treatment effect per dollar would end up spending, on average, \$6.86 for every white client, but only \$4.54 on average per Vietnamese client.
Policymakers and other stakeholders may deem this disparity to be undesirable, and may thus be willing to accept lower overall appearance rates 
in return for more equal spending across groups.

\subsection{Exploring Inherent Trade-offs}
\label{sec:tradeoffs}

To further explore the tradeoff between appearance rates and spending parity,
we now consider a synthetic client population with
5,000 Black and 5,000 white clients.
For simplicity, we assume that
each client has a 75\% chance of appearing at court in the absence of rideshare assistance, 
and is guaranteed to appear if provided a ride.
Further, we set a fixed annual budget of \$5 per person, or \$50,000 total.
Finally, we assume that Black clients live farther from court on average.
Consequently, the average expected treatment effect per dollar is lower for Black clients than for white clients.
This pattern induces a tension between maximizing total appearances and equalizing spending across the two groups.\footnote{Optimizing for parity across protected demographic groups, including race groups, is legally impermissible in some contexts in the U.S., as we discuss more in Section~\ref{sec:discussion}.}
We describe the data-generating process for this synthetic population in detail in Appendix~\ref{appx:trade_offs}.

\begin{figure}[t]
\begin{center}
\centerline{\includegraphics[width=0.65\columnwidth]{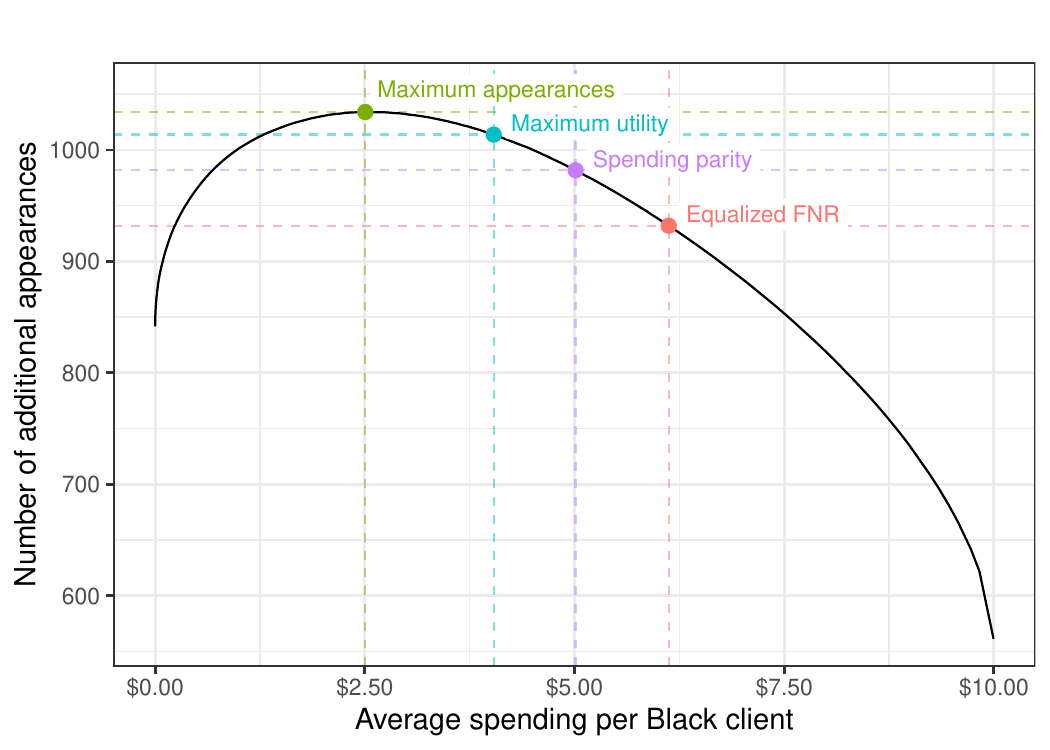}}
\caption{
The Pareto frontier for a stylized population model, 
showing the trade-off between appearances and 
spending per Black client.
The vertical axis shows expected additional appearances relative to a policy that does not provide rideshare assistance to any clients.
Under this model, 
common heuristics (e.g. maximizing appearances, and demanding demographic or error-rate parity) lead to sub-optimal policies.
}
\label{fig:pareto-frontier}
\end{center}
\end{figure}

In Figure~\ref{fig:pareto-frontier}, 
we trace out the Pareto frontier for this example, which shows how the maximum possible number of appearances (on the vertical axis) 
varies under different allocations of rideshare assistance to Black clients (on the horizontal axis).
Each point on the frontier corresponds to a threshold policy that provides assistance to clients with the largest treatment effects in each group, subject to demographic and budget constraints.

Along the Pareto frontier, a policymaker ostensibly has more and less preferred outcomes.
For example,
imagine that a given policymaker's utility is maximized at the blue point on the curve.  
In contrast, the point at the crest of the curve (in green) achieves the highest number of overall appearances, 
but is a suboptimal policy because it underspends on Black clients,
at least according to the stakeholder's preferences.
Similarly,
a policy that achieves perfect spending parity (i.e., the purple point) also yields suboptimal outcomes relative to the policymaker's preferences, 
because too many appearances are lost in order to achieve spending parity.
We also plot the point on the curve
corresponding to 
equal false negative rates (FNR) between groups (in pink).\footnote{In this case, equal FNR means that $\Pr(\pi = 0 \mid Y(0) = 0, Y(1) = 1, G = g) = \Pr(\pi = 0 \mid Y(0) = 0, Y(1) = 1)$.
That is, among those who would benefit from the assistance, an equal proportion do not receive it in both groups.
}
A constraint that demands error-rate parity---as opposed to maximizing utility more directly---can again result in a sub-optimal balance between maximizing appearances and evenly distributing transportation assistance, 
relative to the underlying preferences of the policymaker.
In contrast to the axiomatic approach common to past work,
this simple example helps illustrate the value of viewing decisions from a consequentialist perspective.

\subsection{Eliciting Preferences}
\label{sec:survey}

\begin{figure}[t]
	\centering
	\begin{subfigure}[t]{\columnwidth}
		\centering
		\includegraphics[width=0.5\columnwidth]{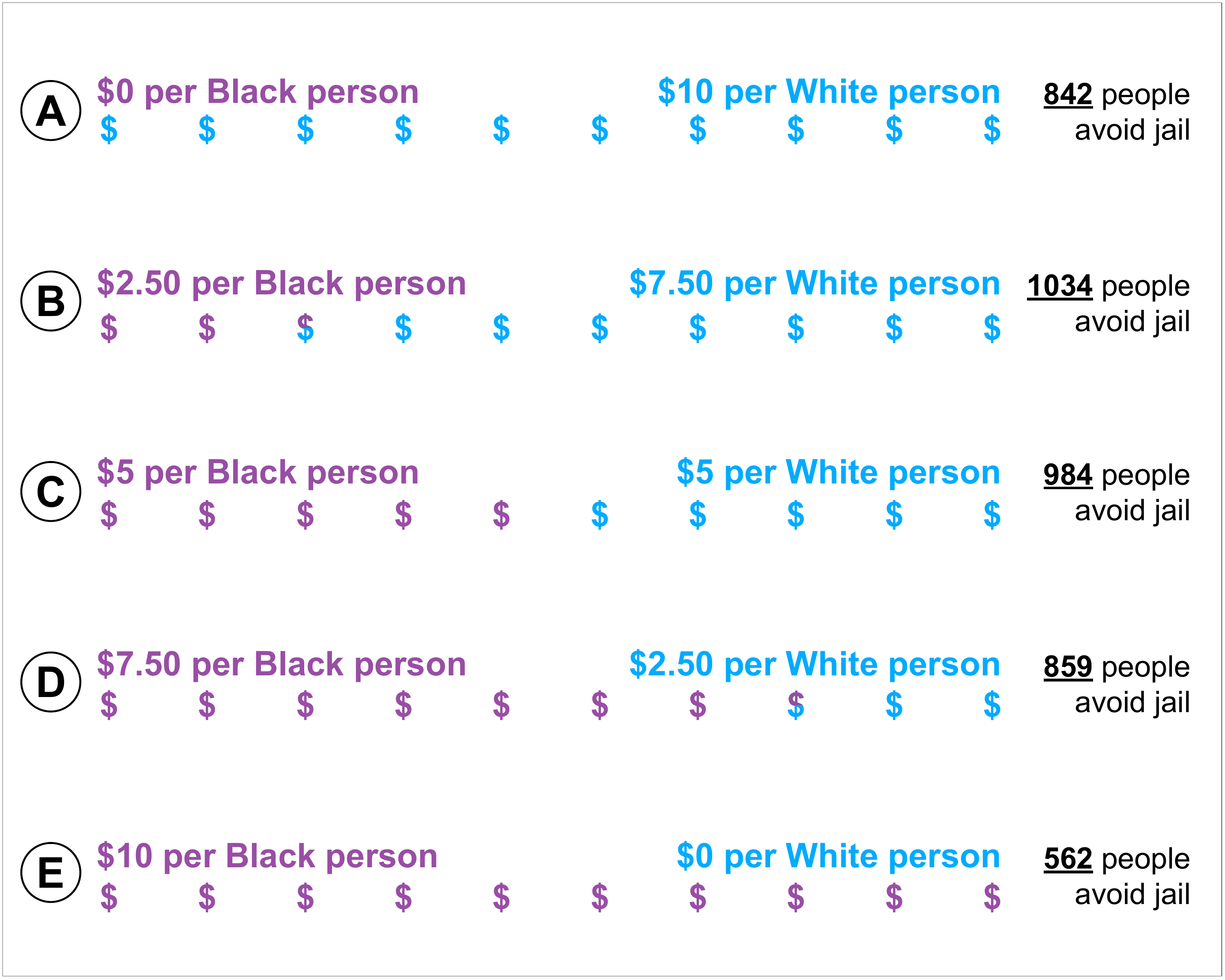}
		\caption{Graphic shown to survey participants.
}
        \vspace{1em}
		\label{fig:survey-graphic}
	\end{subfigure}
	\begin{subfigure}[t]{\columnwidth}
		\centering
		\hspace{-1.5em}\includegraphics[width=0.72\columnwidth]{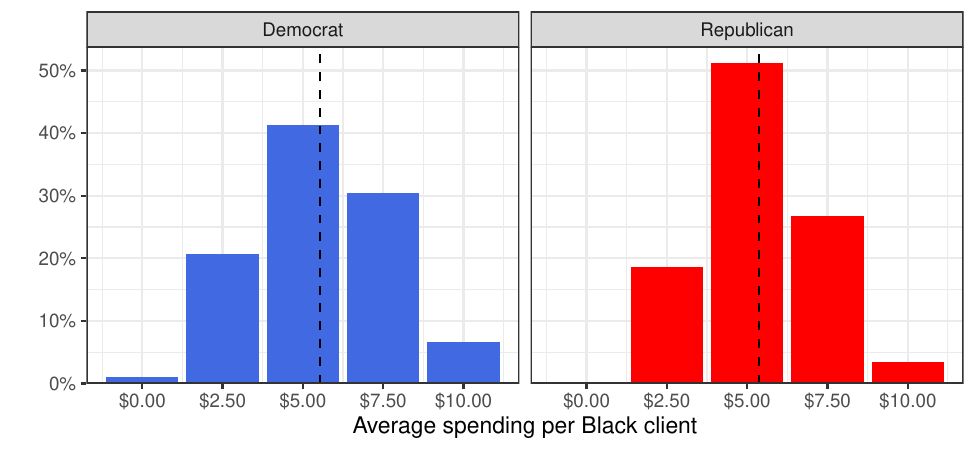}
		\caption{Survey results from 297 respondents, split by self-identified U.S. political party affiliation. Mean preferred allocations are represented by the vertical dashed line.}
		\label{fig:survey_results}
	\end{subfigure}
	\vskip 0.1in
	\caption{The graphic in \subref{fig:survey-graphic} was shown to survey participants to help them select their preferred ride allocation policy. 
    In this hypothetical scenario, option B maximizes appearances, while option C corresponds to spending parity.
    The survey results in \subref{fig:survey_results} show that both Democrats and Republicans prefer policies that spend roughly equal amounts on Black and white clients, 
    but there is a wide range of preferences among members of both groups.
    }
	\label{fig:survey_graphic_and_results}
\end{figure}

We now empirically examine preferences for allocating transportation assistance in our hypothetical scenario above.
To do so, we designed and administered a poll to a diverse sample of 297 Americans.
We ran our survey on the Prolific platform,
selecting the platform's ``U.S. representative sample'' option to recruit respondents,
where respondents' self-identified sex, age, and ethnicity 
is comparable to a random population of U.S. adults, as determined by the U.S. Census.
Survey respondents learned about
our running example of providing clients with free rides to court,
and then read a short description of the hypothetical jurisdiction described above. 
(This prompt is included in full in Appendix~\ref{appendix:survey_prompt}.) We then asked respondents to select their preferred tradeoff among five possible options drawn from the Pareto frontier in Figure~\ref{fig:pareto-frontier}.
To aid in their decision, participants were shown the graphic depicted in Figure \ref{fig:survey-graphic}.
Participants were randomly shown either an ascending or descending version of this graphic to mitigate anchoring to the first options shown.

Our survey results are presented in Figure~\ref{fig:survey_results},
and illustrate two key points.
First, the vast majority of respondents prefer trading at least some ``efficiency'' (i.e., as measured by total number of people who avoid jail due to receiving transportation assistance) in order to spend more money on Black clients.    
This broad preference for incorporating equity considerations into algorithmic decision making mirrors past results~\citep{koenecke2023}.
Second, there is substantial heterogeneity in preferences that elides traditional group boundaries. 
For example, there is considerable variation in preferences within self-identified Democrats and Republicans; at the same time, the average preference is comparable across these two groups.
We observe similar patterns across a number of other demographic characteristics of the respondents, 
including gender and race/ethnicity,
as we show in Appendix~\ref{appendix:survey_prompt}. 
These results suggest that traditional
axiomatic approaches to algorithmic fairness---which do not consider the specific context of decisions---risk yielding policies that do not reflect the preferences of stakeholders.
In contrast, a more consequentialist perspective allows us to develop algorithms that better balance the difficult trade-offs inherent to many policy problems.

\section{Computing Equitable Policies}
\label{sec:problem}

For the rideshare example in the previous section, it is computationally straightforward to trace out the Pareto frontier: for any fixed budget allocated to each group, one can maximize appearances by offering rides to those clients with the largest (estimated) gain in appearance rate per dollar, while constraining spending to the allotted per-group budget.  (We formally show the optimality of this strategy in Appendix~\ref{sec:thresholds}.)
As a result, given preferences over various outcomes (e.g., trading off appearances with spending parity), one can efficiently determine the utility-maximizing allocation strategy.
However, in more complicated scenarios---with more complex preferences and potential actions---it is not immediately clear how to find optimal allocation strategies, even when preferences and treatment effects are fully known.
Fortunately, for a large class of preferences, it is indeed feasible to efficiently compute utility-maximizing policies, as we now describe.
In Section~\ref{sec:online_learning}, we consider the problem of learning optimal polices when preferences are known but treatment effects are not.

To generalize from our running example, consider a sequential decision-making setting where, at each time step, one first observes a vector of covariates $X_i$ drawn from a distribution $\mathcal{D}_X$ supported on a finite state space $\mathcal{X}$,
and then must select one of $K$ actions from the set $\mathcal{A} = \{a_1, \dots, a_K\}$.
For example, in our motivating application, $X_i$ 
might encode an individual's demographics, history of appearance, alleged charges, and distance from court,
and the set of actions might specify whether or not rideshare assistance is offered (in which case, $K=2$).
In general, we allow randomized decision policies $\pi$, 
where the action $\pi(x)$ is (independently) drawn from a specified distribution on $\mathcal{A}$.

In practice, there are often constraints on the distribution of actions taken. 
For example, budget limitations might mean that only a certain amount of money can be spent on average per client, with varying known costs per context and action $c(x, a_k)$.
As such, given a cap $b$ for average per-person expenditures, we require our decision policy $\pi$ to satisfy 

\begin{align*}
 \EE_{X} [c(X, \pi(X))] & = \sum_{x,k} 
\Pr(X = x) \cdot \Pr(\pi(x) = a_k) \cdot c(x, a_k)\\
&\leq b.
\end{align*}
In many common scenarios, we might imagine a setup where one ``control'' action $a_0$ has no cost (i.e., $c(x, a_0) = 0$), while all other available actions are costly
(i.e., $c(x, a_{k}) > 0$ for $k>0$).

To arbitrate between feasible policies (i.e., those that adhere to the budget constraint), policymakers might consider both the direct outcomes of a policy (e.g., on appearance rates) and the relative allocation of benefits across demographic groups.
To formalize this idea, we suppose each action is associated with a potential outcome $Y_i(a_k)$,
and, in particular, taking action $\pi(X_i)$ results in the (random) outcome $Y_i(\pi(X_i))$.
For example, $Y_i(1)$ may indicate whether the $i$-th individual would attend their court date if offered rideshare assistance, and $Y_i(0)$ may indicate the outcome if assistance were not provided.

Now, to facilitate computation, we assume a policymaker's utility $U(\pi)$
of any decision policy $\pi$ can be approximated by a flexible function of the following form:
\begin{equation}
\label{eq:utility}
\begin{aligned}
    U(\pi) &= \EE_{X, Y}[r(X, \pi(X), Y(\pi(X)))] \\
    & \hspace{-7mm} - \sum_{\ell = 1}^L \sum_{g \in \mathcal{G}} \lambda_{g, \ell} 
             \Big\lvert
             \EE_{X,Y}[f_{\ell}(X, \pi(X), Y(\pi(X))) \mid g \in s(X)] - \EE_{X,Y}[f_{\ell}(X, \pi(X), Y(\pi(X)))] 
             \Big\rvert,
\end{aligned}
\end{equation}
where 
$r$ and $f_{\ell}$ are fixed functions that parameterize this class of utilities,
$\lvert \cdot \rvert$ is an absolute value, 
$\lambda_{g,\ell}$ are non-negative constant parameters,
and
$s(X_i) \subseteq \mathcal{G}$ is a set of associated identities for each individual, where $\mathcal{G}$ is a finite set.
In discussions of algorithmic fairness, special attention is often paid to these groups, which may consist of legally protected characteristics.
For example, $s(X_i$) might specify both an individual's race and gender.

The first term in $U(\pi)$ captures the social value 
directly associated with each decision, and
the second term penalizes differences in allocations and outcomes across groups.
For example, in our motivating application,
we might set 
\begin{equation}
\label{eq:r}
r(x, a, y) = (a + c_1 y) \cdot \left (1 + c_2 \cdot \mathbb{I}_{\textup{frequent}}(x) \right),
\end{equation}
where 
$a \in \{0,1\}$ indicates whether rideshare assistance is provided, 
$y \in \{0,1\}$ indicates whether a client appeared at their court date,
$\mathbb{I}_{\textup{frequent}}(x)$ indicates whether an individual is in frequent contact with law enforcement,
and the positive constants $c_1$ and $c_2$ characterize the relative values of the terms.
(In Eq.~\eqref{eq:r}, we do not multiply $a$ by a constant, since the overall scale of $r$ is arbitrary.)
This choice of $r$ encodes the (hypothetical) policymaker's belief
that: 
(1) appearing at one's court date is better than not appearing;
(2) receiving  rideshare assistance is better than not receiving it, regardless of the outcome;
and (3) the value of both assistance and appearance 
is greater for those who frequently encounter law enforcement (i.e., those for whom an open bench warrant is more likely to result in jail time because they are more likely to encounter law enforcement).

In addition to preferring
transportation assistance policies that boost appearance rates, 
a policymaker might also prefer those for which we spend similar amounts per person across demographic groups,
to ensure such investments are broadly applied across an agency's jurisdiction.
The second term of $U(\pi)$ can be used to encode these parity preferences.
For example, setting $f(x, a, y) = c(x, a)$ would encode a preference for spending parity.
Depending on the application, one could imagine similarly penalizing a given policy if the distribution of \textit{actions} or \textit{successes} were unequal across groups.

In practice, to encode preferences in this way, one might first show stakeholders anticipated outcomes of various hypothetical policies, akin to our survey above.
We could then sweep over parameters to produce a utility function of the appropriate form that accurately captures the elicited preferences. 
Importantly, and in contrast with an axiomatic approach, our consequentialist paradigm is predicated on the belief that there are not universal, context-independent constraints on policies.
Rather, the utility of a policy depends critically on how much one objective must be sacrificed to achieve another. 

Given this setup, our goal is to find a policy $\pi^*$ that maximizes utility while staying within budget. Formally, we seek to solve the following optimization problem:
\begin{equation}
\begin{aligned}
\label{eq:opt-problem}
    & \pi^* \in \arg \max_{\pi} \ U(\pi)\\
    & \text{subject to:} \ \EE_{X} [c(X, \pi(X))] \leq b.
\end{aligned}
\end{equation}
We next discuss how to efficiently solve this optimization problem.

\subsection{Computing Optimal Decision Policies}
\label{sec:LP}

To compute optimal policies, 
we assume, in this section, that one knows the distribution of $X$ and the conditional distribution of the 
potential outcomes $Y(a_k)$ given $X$---i.e., $\DD(X)$ and $\DD(Y(a_k) \mid X)$.
(In Section~\ref{sec:online_learning}, we consider how to learn optimal policies when historical data on treatment effects are not known.)
Given this information, we show the optimization problem in Eq.~\eqref{eq:opt-problem}
can be expressed as a linear program (LP), yielding an efficient method for computing an optimal decision policy.

To construct the LP, first observe that 
any policy $\pi$ corresponds to a matrix 
$v \in \RR_+^{\mathcal{X}} \times \RR_+^K$,
where $v_{x,k}$ denotes the probability $x$ is assigned to action $k$.
Thus, the complete space of policies $\Pi$ can be written as:
\begin{equation*}
\Pi = \left \{v \in \RR_+^{\mathcal{X}} \times \RR_+^K \, \left | \, \forall x \in \mathcal{X}, \ \sum_{k=1}^K \right . v_{x,k} = 1 \right \},
\end{equation*}
and we can accordingly view the components $v_{x,k}$ of $v$ as decision variables in our LP.
Now, in this representation, the budget constraint $\EE_{X} [c(X, \pi(X))] \leq b$ in Eq.~\eqref{eq:opt-problem} can be expressed as
a linear inequality on the decision variables:
\begin{equation*}
    \sum_{x, k} \Pr(X = x) \cdot v_{x,k} \cdot c(x, a_k) \leq b.
\end{equation*}
Finally, we need to express the utility $U(x)$ in linear form. First, note that:

\begin{align*}
  U(\pi) &= \sum_{x, k} \EE_Y [r(x, a_k, Y(a_k)) \mid X = x] \cdot \Pr(X = x) \cdot v_{x,k} \\
  & \hspace{10mm} - \sum_{\ell} \sum_g \lambda_{g,\ell} \Bigg | \sum_{x,k}   \Bigg(  \frac{ \mathbb{I}(g \in s(x)) \Pr(X = x)}{\Pr(g \in s(X))} \cdot \EE_Y [f_{\ell}(x, a_k, Y(a_k)) \mid X = x ]\\
  & \hspace{60mm}-\Pr(X = x) \cdot \EE_Y [f_{\ell}(x, a_k, Y(a_k)) \mid X = x ] \Bigg) v_{x,k}  \Bigg |.
\end{align*}
Due to the absolute value, the expression above is not linear in the decision variables. But we can use a standard construction to transform it into an expression that is. In general, 
suppose we aim to maximize an objective function of the form 
\begin{equation}
\label{eq:abs-opt}
    \alpha^T v - \sum_{g,\ell} \lambda_{g,\ell}|\beta_{g,\ell}^Tv|,
\end{equation}
where $\alpha$ and $\beta$ are constant vectors.
We can rewrite this optimization problem as a linear program that includes additional (slack) variables $w_{g,\ell}$:
\begin{equation}
\label{eq:abs-lp}
\begin{aligned}
\text{Maximize:} & \quad \alpha^T v - \sum_{g,\ell} \lambda_{g,\ell} w_{g,\ell} \\
\text{Subject to:} & \quad 0 \leq w_{g,\ell}, \\
& \quad -w_{g,\ell} \leq \beta_{g,\ell}^T v \leq w_{g,\ell}.
\end{aligned}
\end{equation}
For completeness, we include a proof of this equivalence in Appendix~\ref{appendix:LP}.

Putting together the pieces above, we now write our policy optimization problem in Eq.~\eqref{eq:opt-problem} as
the  following linear program:
\begin{align*}
& \textup{Maximize:} \\
& \quad \sum_{x,k} \EE_Y [r(x, a_k, Y(a_k)) \mid X = x] \cdot \Pr(X = x) \cdot v_{x,k} - \sum_{g,\ell} \lambda_{g,\ell} w_{g,\ell} \\
& \textup{Subject to:} \\
& \quad v_{x,k},\, w_{g,\ell}  \geq  0 \hspace{4.5mm} \forall \, x,k,g,\ell\\
& \quad \sum_k v_{x,k} = 1 \hspace{4.5mm} \forall \, x,\\
& \quad \sum_{x, k} \Pr(X = x) \cdot v_{x,k} \cdot c(x, a_k) \leq b,~\text{and} \\
& \quad -w_{g,\ell} \leq \sum_{x,k} \Bigg(  \frac{ \mathbb{I}(g \in s(x)) \Pr(X = x)}{\Pr(g \in s(X))} \cdot \EE_Y [f_{\ell}(x, a_k, Y(a_k)) \mid X = x])\\
& \hspace{4cm} - \Pr(X = x) \cdot \EE_Y [f_{\ell}(x, a_k, Y(a_k)) \mid X = x] \Bigg) v_{x,k} \leq w_{g,\ell} \hspace{4.5mm} \forall g,\ell.
\end{align*}

Our approach above is a computationally efficient method for finding optimal decision polices. 
In theory, linear programming is (weakly) polynomial in the size of the input: 
$O(|\mathcal{X}|K + |\mathcal{G}|L)$
variables and constraints in our case.
In practice, using open-source software running on conventional hardware,
we find it takes approximately 1--2 seconds to solve random instances of the problem on a state space of size 
$|\mathcal{X}| = 1,000$
with $|\mathcal{G}| = 10$ groups, 
$K=5$ treatment arms, and $L=1$ parity penalities.\footnote{We used the Glop linear optimization solver, as implemented in Google OR-Tools (\url{https://developers.google.com/optimization/}).
}

\section{Sample Complexity Bounds on Learning Optimal Policies}
\label{sec:bounds}

To solve our policy optimization problem, we have thus far assumed
perfect knowledge of the distribution of potential outcomes $\mathcal{D}(Y(a_k) \mid X)$,
which allows us to compute the necessary inputs for our linear program.
In reality, however, this distribution must typically be learned from observed data. 
One common approach for estimating the impact of actions is to run an experiment in which actions are randomly allocated, 
potentially in a way to ensure that all actions are taken an equal number of times, or to ensure each group of interest experiences all actions evenly.  
Note that such data collection strategies do not adapt in response to observed outcomes of actions (such as some actions yielding higher appearance rates partway through data collection). 
In Section~\ref{sec:sample_bounds}, we formally analyze these non-outcome-adaptive data collection strategies, and provide an upper bound on the number of samples necessary to ensure we can compute a near-optimal allocation strategy for our desired objective. 
In Section~\ref{sec:sample_bounds_costs}, we discuss some considerations relating to experimental cost. 
We present this initial analysis to highlight how in some cases, the amount of data needed may not differ substantially from simpler objectives that do not involve parity constraints. 
The other benefit of this first analysis is that it involves creating an experimental design for data collection in advance, which makes the resulting data easily suitable for standard statistical inference. 
However, in practice there can be significant benefits to changing the data gathering strategy over the course of an experiment, 
as more effective actions can be prioritized faster. 
In Section~\ref{sec:online_learning} we demonstrate this through an alternative, contextual-bandit-based data collection strategy 
that can often learn optimal policies more efficiently  by judiciously exploring the effects of actions. 
We demonstrate the advantages of this alternative strategy in an empirically grounded simulation study.

\subsection{Sample Complexity Bounds }
\label{sec:sample_bounds}
A natural concern for practitioners is whether balancing multiple complicated objectives---like the competing outcomes highlighted in our utility function in Eq.~\eqref{eq:utility}---requires obtaining substantially more data than in traditional, single-objective settings. 
Further, in most domains of practical interest, individuals are described by a set of features, and it is beneficial to know how choices about representing these individuals
impact the amount of data required. 
To address these considerations,
we provide upper bounds on the samples 
needed to construct near-optimal policies with high probability, 
focusing on spending parity by setting $f(x,a,y) = c(x,a)$ 
(following our running example).
Our aim in this analysis is not to provide tight sample complexity bounds, but rather to
examine at a high level how additional parity  objectives and modeling choices affect the amount of data required.  
Our results suggest that one may not need
much more data to learn a multi-objective policy that incorporates equity preferences compared with a single-objective reward-maximizing policy, 
and that known structure on the data generating process can substantially reduce the amount of data required.

Our work is related to a deep literature in multi-armed bandits and contextual multi-armed bandits (see~\cite{lattimore2020bandit} for a fairly recent textbook overview). The majority of this research has focused on providing cumulative regret guarantees of online, adaptive algorithms for a wide range of settings, including seminal results for finite armed bandits~\citep{auer2002finite} and linear contextual bandits~\citep{abbasi2011improved}, as well as more recent interest in logistic models (e.g.~\citet{li2017provably,jun2021improved}). Approaches that minimize cumulative regret bounds can be different from algorithms that provide 
sample complexity bounds that are probably approximately correct (PAC)---i.e., 
methods that after a sufficient amount of data, output a decision policy that is near optimal with high probability. 

Interestingly, prior work (e.g.,~\citet{jin2018q}, Section 3.1) has provided an online-to-batch reduction that can be used to convert a contextual multi-armed bandit algorithm with a cumulative regret result to a sample complexity bound on the number of samples needed to extract a near-optimal policy with high probability. 
However, most contextual MAB algorithms with cumulative regret guarantees rely on selecting actions under the principle of optimism under uncertainty with respect to the immediate estimated reward for the current context. 
The resulting regret bound is defined with respect to the best action that could have been selected. In contrast, in our setting the objective is to compute a policy $\pi$ that maximizes the utility function $U(\pi)$ which includes both a reward maximization term and a spending parity term. 
In general, the optimal policy in our setting will not match an optimal policy that maximizes only the reward. This implies we cannot directly leverage an online-to-batch reduction from existing algorithms with cumulative regret bounds, since the regret bounds provided by those algorithms will not provide regret bounds for our setting. To our knowledge, none of the existing online contextual bandit algorithms consider additional parity objectives, or a joint policy across contexts, as in our work.  

There is fairly limited work on MAB and contextual MAB algorithms that directly provide PAC guarantees. \citet{bandit_complexity_lower_bound}'s foundational work provided sample complexity bounds for multi-armed bandits with a finite set of arms, and we will build on their work for providing sample complexity bounds for our setting given a finite set of contexts and arms/actions, also known as the tabular setting. 
Concurrent to the development of this work, there has been some recent interest in sample complexity bounds for contextual bandits (e.g.,~\cite{zanette_doe,pacchiano2023experiment,li2022instance}) which we will discuss further under different assumptions of the underlying data generating process.

We now introduce some additional assumptions. As in Sections~\ref{sec:problem} and \ref{sec:LP}, we further assume throughout this section that the state space $\mathcal{X}$ is finite, and that the costs and the distribution of $X$ are known.
In practice, information on the distribution of $X$ can often be estimated from historical data, before any interventions are attempted.
Let $\pistar$ be an optimal policy solution, as defined in Eq.~\eqref{eq:opt-problem}, with corresponding utility $U(\pistar)$. We define the estimated utility function $\hat{U}(\pi)$ for a particular decision policy as 
\begin{equation}
\begin{aligned}
    \Uhat(\pi) & = \EE_{X, Y}[\hat{r}(X, \pi(X), Y(\pi(X)))] \\
     & \hspace{5mm} - \sum_{g \in \mathcal{G}} \lambda_g 
             \left|
             \EE_{X}[c(X, \pi(X)) \mid g \in s(X)] - \EE_{X}[c(X, \pi(X))] 
             \right|, \label{eqn:uhat}
\end{aligned}
\end{equation}where $\rhat$ is the estimated reward function learned from data. Let $\pihat$ be a solution to the optimization problem in Eq.~\eqref{eq:opt-problem}, where we maximize $\Uhat(\pi)$ instead of $U(\pi)$. 
Further, let $r(x, k) = r(x,a_k,Y_{X=x}(a_k))$ be the (random) reward if action $a_k$ is taken in the context $x$, where $Y_{X=x}(a_k)$ is the (random) potential outcome conditional on the given context. 
Note that the randomness in $r(x, k)$ stems entirely from the randomness in the potential outcomes $Y_{X=x}(a_k)$.

First we present a simple lemma that allows us to bound the utility error by the reward estimation errors which we will use for the proofs of the theorems.

\begin{lemma}\label{lemma:rew_to_val_err_bound}
The loss of utility due to using $\pihat = \argmax_{\pi} \hat{U}(\pi)$ is bounded by
\begin{equation}
            U(\pistar) - U(\pihat)  
            \leq 2 \sum_x p_x \max_k |r_{xk} - \rhat_{x k}| \label{eqn:reward_diff}
 \end{equation}
\end{lemma}

\begin{proof}{Proof.}
Both $\pihat$ and $\pistar$ by definition satisfy any provided constraints. Then 
\begin{eqnarray}\label{eq:utility_triangle_ineq}
    U(\pistar) - U(\pihat) &=& U(\pistar) - \Uhat(\pihat) + \Uhat(\pihat) - U(\pihat)
  \\
  &\leq& U(\pistar) - \Uhat(\pistar) + \Uhat(\pihat) - U(\pihat).
\end{eqnarray}
where the second equation follows because  $\pihat = \argmax_{pi} \Uhat(\pi)$, and so $\Uhat{\pistar} \leq \Uhat{\pihat}$. 

Since the parity part of the utility function depends only on the policy, and not the rewards, it cancels out in Equation~\ref{eq:utility_triangle_ineq}, leaving

\begin{align}
    U(\pistar) - \Uhat(\pistar) + \Uhat(\pihat) - U(\pihat) 
    &= \sum_x p_x \sum_k \pistar_{xk}(r_{xk} - \rhat_{xk}) + \sum_x p_x \sum_k \pihat_{xk}(\rhat_{xk} - r_{xk}) \\
    &\leq 2 \sum_x p_x \max_k |r_{xk} - \rhat_{x k}| 
\end{align}
\qed
\end{proof}

We now present upper bounds on the sample size needed to learn near-optimal policies.
Specifically, for fixed $\epsilon, \delta > 0$,
we provide sample bounds which ensure 
the utility gap $U(\pistar) - U(\pihat)$
is small with high probability, i.e.,
$\P(U(\pistar) - U(\pihat) < \epsilon) > 1- \delta$.
We prove these bounds under three different common distributional assumptions on the reward model: tabular, linear and logistic reward models: 
\begin{enumerate}
    \item (Tabular Rewards) 
We assume $r(x, k) \eqdist f(x, k) + \eta$, where $\eta \sim \sigma^2$-subGaussian and 
    $\eta$ is independent across draws of the reward function.
    \item (Linear Rewards) 
    We assume there are (known) features $\phi(x,a_k) \in \Reals^d$ of the state and action, and (unknown) parameters $\thetastar \in \Reals^d$ such that
$r(x, k) \eqdist \phi(x,a_k)^T\thetastar + \eta$,
    where $\eta \sim \sigma^2$-subGaussian and 
    $\eta$ is independent across draws of the reward function.
\item (Logistic Rewards) 
    We assume there are (known) features $\phi(x,a_k) \in \Reals^d$ of the state and action, and (unknown) parameters $\thetastar \in \Reals^d$ such that
$\P(r(x, k) = 1) = \text{logit}^{-1}(\phi(x,a_k)^T\thetastar)$,
    where the reward is independent across draws.
\end{enumerate}
Full proofs for this section are in Appendix~\ref{appx:proofs_sample_bounds}.

\begin{theorem}[Tabular Rewards]
\label{thm:rct_tabular}
Assume the reward is tabular. Assume $n$ samples are collected in a round-robin fashion (i.e., for each context $x$, select the least-sampled action $a_k$ in that context, breaking ties arbitrarily). Further assume that the data are used, per $(x,a)$ pair, to estimate a maximum likelihood reward model $\hat{r}(x,a)$ that is used to define $\Uhat$ (see Equation~\ref{eqn:uhat}) and $\pihat = \argmax \Uhat$. 
Then for $\epsilon > 0$, $\delta > 0$, $\lambda_g \geq 0$, if 
\begin{equation*}
    n  \geq   16\sigma^2 \frac{|X||A|}{\epsilon^2}\ln\frac{4|X||A|}{\delta} \ln \frac{2|X|}{\delta},
\end{equation*}
then 
$\P(U(\pistar) - U(\pihat) < \epsilon) > 1- \delta$.
\end{theorem}
Standard proofs for tabular multi-armed bandits rely on concentration inequalities on the estimated reward functions~\citep{bandit_complexity_lower_bound}. Unlike this work, we additionally need to estimate the reward per context to ensure the final estimated utility, which is a weighted sum over contexts, is near accurate.  We show it suffices to estimate the reward outcome for a particular $(x,a)$ pair to differing levels of accuracy, based on the probability of the context $x$, which allows our final bounds to be independent of the minimum context probability. This result is identical to finding a policy such that $\sum_x p(x) r(x,pi(x))$ is $\epsilon$-close to optimal. Note that this sample bound is identical whether or not we consider spending parity (i.e., regardless of whether $\lambda_g > 0$ for some $g$ or $\lambda_g = 0$ for all $g$ in Eq.~\eqref{eq:utility}). 
Intuitively, this is the case because
the sample complexity is driven by uncertainty in estimating the rewards. The parity component itself depends only on the allocation across subgroups, which 
can be computed exactly given any policy, independent of the estimated rewards.

Our sample bound in the tabular setting scales linearly with the product of the size of the context space and the action space,
which suggests that prohibitively large sample sizes may be needed in practice.  When the contexts are independent, this dependence is unavoidable, as a sample complexity lower bound shows at least $\frac{|A|}{\epsilon^2}$ samples are required for a single context~\citep{bandit_complexity_lower_bound}. 
Our next theorem proves 
significantly fewer samples are sufficient if the reward function is a linear model.

\begin{theorem}[Linear Rewards]
\label{thm:rct_linear}
Assume the reward is linear
with feature representation $\phi(x,a_k) \in \Reals^d$. 
For any non-adaptive strategy $\pi$ used to collect samples, let 
\begin{align*}
\Sigma(\pi) & = \EE [ \phi(X, \pi(X)) \phi(X, \pi(X))^T] 
= \sum_{x,k} \P(X=x) \cdot \P(\pi(x) = a_k) \cdot \phi(x, a_k) \phi(x, a_k)^T
\end{align*}
be the expected induced covariance matrix.
Also define a problem-dependent constant 
$$\rho_0(\pi) = \max_{x, k} \norm{\Sigma(\pi)^{-1/2}\phi(x, a_k)}/\sqrt{d}.$$ 
There exists a static (it does not update as data is gathered) data collection strategy $\pitilde$ such that, for any $\epsilon > 0, \delta > 0, \lambda_g \geq 0$
and 
\[n \geq \max\{6 \rho_0(\pitilde)^2 d\log(3d/\delta), \ \BigO{\sigma^2 d^2/\epsilon^2}\},\]
with cost incurred \(c \leq n\max_{x k } c(x, a_k)\), we have $\P(U(\pistar) - U(\pihat) < \epsilon) > 1- \delta$.
\end{theorem}
The quantity $\rho_0$ in the above bound is known as `statistical leverage' \citep{hsu2014random}.
If no prior information is available, we know only that
$\rho_0 \leq \norm{\phi}_{2}/ \sqrt{\lambda_{\min}(\Sigma)}$. In the worst case, $\rho_0$ may scale with the condition number of the covariance matrix.
However, in many practical settings  $\rho_0$ 
is not large compared to $1/\epsilon^2$,
and so the upper bound scales like $\sigma^2 d^2 / \epsilon^2$. $\pitilde$ refers  to the data collection strategy in Theorem~\ref{thm:rct_linear}, 
and $\pihat$ refers to the performance of a learned decision policy that maximizes the utility given the gathered data.

The above result was motivated in part by, as we noted earlier,  that in general we cannot directly leverage  cumulative regret bounds for contextual bandits since the bounds relate empirical decisions to the optimal decision for the current context, with no further constraints or objectives. However, concurrent research by~\cite{zanette_doe} on contextual linear bandits, provides a sample complexity result sufficient to  upper bound the expected reward error,
\begin{equation}
\int_{x} p_x \max_{k} | \phi(x,a_k)^T (\theta^* - \hat{\theta})|.
\end{equation}
From our Lemma~\ref{lemma:rew_to_val_err_bound}, we can use this to directly bound our expected utility. Therefore we could also use their data collection strategy and bound and obtain a $O(\frac{d^2}{\epsilon^2})$ sample complexity result, which does not depend on $\rho_0(\pi)$. Their sample complexity results is minimax (in the dominant term, up to constants and log terms) optimal for linear contextual bandits (for both static data collection strategies that do not update based on the observed rewards, and for adaptive ones that do update as rewards are observed). This implies that using their algorithm also yields a minimax (in the dominant term, up to constants and log terms) optimal sample complexity result for our setting, since crucially, the parity objective depends only on the policy.

Given the practical importance of binary rewards, bounds for this setting would also be beneficial. However, while there has been some recent attention to logistic bandits \citep{li2017provably,dong2019performance,jun2021improved}, these papers have focused on cumulative regret guarantees. \cite{jun2021improved} provide some PAC bounds on returning the optimal arm for logistic bandits. We are not aware of sample complexity results for contextual logistic bandits. In  Appendix~\ref{appx:proofs_sample_bounds} we provide some preliminary bounds on the suboptimality of the performance 
of the resource allocation strategy derived from using estimated plug-in parameters for the logistic reward model (Theorem~\ref{thm:rct_logistic}). Our results require strong assumptions, and depend on problem-specific properties and the data collection strategy, suggesting there is significant room for similar results under more relaxed, and constructive, conditions. Contextual multi-armed bandits are an  active research area in the machine learning community, and it is likely our results can benefit from future results on sample complexity algorithms and bounds for contextual bandits.

 \subsection{Cost-Aware Sample Complexity}
\label{sec:sample_bounds_costs}
While we have provided sufficient sample bounds, it is also useful to 
consider
bounds on the experimental cost sufficient to learn a near-optimal policy. Note by this we mean a bound on the cost required to learn a near-optimal policy, not the budget constraint on the learned decision policy. In general the amount of resources available during the experimental period may be different than the resources available during sustained deployment.

In the tabular case, we can prove that an experimental budget of $O(\sum_{x, k} c(x,a_k)  \log(1/\delta)/ \epsilon^2 )$ is sufficient (see Corollary~\ref{cor:cost_upper_bounds_tabular} in the Appendix). When the domain can be modeled with a linear or logistic reward model, we can simply multiply our sample bounds by the maximum cost $\max_{xk} c(x,a_k)$ to get sufficient upper bounds on the experimental cost. In some settings these bounds are likely order optimal in the dominant terms. For example, in the tabular case without parity preferences, when costs are homogeneous across contexts and actions, and the context distribution is uniform, the expected experimental cost must be at least $c|X||A| \log(1/\delta)/\epsilon^2$ 
in the worst case (Theorem~\ref{thm:lower_cost_bounds_uniform_tabular} in the Appendix).

However, in general we expect that there are alternate strategies,
with tighter bounds, that are cost-aware. 
As an illustration, consider a setting with two contexts, two actions, bounded rewards, no parity preferences ($\lambda_g = 0$), 
and a budget $b$ (for the final learned decision policy) that is very large. 
As shown in Table~\ref{tab:cost_aware_example}, let costs be \$1 for both actions in context 1, 
\$0 for action 1 in context 2, 
and \$500 for action 2 in context 2. 
Using a round-robin data collection strategy to obtain an $\epsilon$-optimal policy, which we analyzed previously, will take both actions, in both contexts, an equal number of times. However, if the probability of context 2 is very small compared to context 1, 
depending on the reward structure, 
it may be possible to learn a policy that yields a utility that is $\epsilon$-optimal by only learning the optimal action in context 1, and always taking action 1 in context 2 (the 0 cost action). Given the high cost of sampling action 2 in context 2, such an alternate data gathering strategy might be preferable if it is important to optimize the cost incurred when learning the decision policy. 

\begin{table}[t]
\centering
\begin{tabular}{rlll}
          &                                 & \multicolumn{2}{c}{Costs}                                                                  \\ \cmidrule{3-4} 
          & \multicolumn{1}{c}{Probability} & \multicolumn{1}{c}{Action 1} & \multicolumn{1}{c}{Action 2}  \\
          \toprule
Context 1 & 0.98                            & \$1                          & \$1                               \\
Context 2 & 0.02                            & \$0                       & \$500                            \\                    
\bottomrule
\end{tabular}
\vskip 0.05in
\caption{Setup for hypothetical cost-aware example.}
\label{tab:cost_aware_example}
\end{table}

Generally, we expect that a cost-aware data gathering strategy would depend on the interaction between context probabilities, cost functions, and bounds on the potential outcome (reward) ranges. This is an interesting direction for future work, and the technical innovations required will likely further increase when we use parametric assumptions on the reward models, and when budget or parity preferences ($\lambda_g > 0$) are in place.

\section{Adaptively Learning Optimal Policies}
\label{sec:online_learning}
The results from Section~\ref{sec:bounds} suggest the feasibility of solving our desired optimization problem even when the distribution of potential outcomes must be estimated from data. 
However, learning from the type of non-adaptive strategies considered above is typically not the most efficient approach to learning from data.
For instance, in our running example of providing rideshare assistance to public defender clients, if there turns out to be a group of clients with very small need and benefit from assistance, a non-adaptive learning strategy will still allocate a proportional amount of limited resources to such individuals.
In contrast, contextual bandit algorithms are often designed to maximize expected utility while learning, which typically involves estimating the potential performance of each action $a_k$ and using that information to accrue benefits.\footnote{Non-adaptive strategies are particularly useful when testing statistical hypotheses post hoc, which is most easily done with data that are independently and identically distributed across treatments.
We note that there is considerable interest in developing suitable inference methods for this latter goal using data gathered with adaptive, multi-armed bandit strategies (e.g.~\cite{hadad2021confidence,zhang2021statistical}).
} 

\begin{algorithm}[!t]
 \caption{Policy learning procedure.}
 \label{alg:online_learning}
\begin{algorithmic}[1]
 \State {\bfseries input:} 
    Actions $a_k$, 
    budget $b$, 
    parity preferences $\lambda_{g,\ell}$ and $f_{\ell}$,
    reward function $r$,
    covariate distribution $\P(X = x)$, 
    group membership function $s$,
    bandit algorithm, $n_{\text{init}}$
 \State \textbf{initialize}: Randomly treat first $n_{\text{init}}$ people
 \For{each subsequent individual $i$}
     \State \algparbox{Set $\mathcal{D}_i := \{(X_j,A_j,Y_j)\}_{j=1}^{i-1}$,
     where $X_j$, $A_j$, and $Y_j$ denote the covariates, actions, and outcomes for previously seen individuals}
    \State Estimate $\mathcal{D}_{k,x}(Y(a_k) \mid X = x)$
with a parametric family $g(x, a; \theta)$ fit on $\mathcal{D}_i$
    \If{$\varepsilon$-greedy}
        \State \algparbox{Estimate $\EE_Y[r(x, a, Y(a_k)) \mid X = x]$ and $\EE_Y[f_{\ell}(x, a, Y(a_k)) \mid X = x]$ using $g(x, a; \hat{\theta}_i)$, \\ where $\hat{\theta}_i$ is the MLE}
\ElsIf{Thompson sampling}
        \State \algparbox{Estimate $\EE_Y[r(x, a, Y(a_k)) \mid X = x]$ and $\EE_Y[f_{\ell}(x, a, Y(a_k)) \mid X = x]$ using $g(x, a; \hat{\theta}_i^*)$, \\ where $\hat{\theta}_i^*$ is drawn from the posterior of $\hat{\theta}_i$}
        \ElsIf{UCB}
        \State \algparbox{Estimate $\EE_Y[r(x, a, Y(a_k)) \mid X = x]$ using the $\alpha$-percentile of the posterior of \\ $g(x, a; \hat{\theta}_i)$ and estimate $\EE_Y[f_{\ell}(x, a, Y(a_k)) \mid X = x]$ using the $(1-\alpha$)-percentile \\
        of the posterior of $g(x, a; \hat{\theta}_i)$}
\EndIf
     \State Compute nominal budgets $b^*_{i}$ according to Eq.~\eqref{eq:budget_adjustment}
     \State \algparbox{Find solution $\pi_i^*$ of the LP
     in Section~\ref{sec:LP} with $b^*_{i}$ and the input values estimated above}
     \If{$\varepsilon\text{-greedy} \And \textsc{Bernoulli}(\varepsilon)==1$} 
        \State Take random action $A_i$ 
        according to Eq.~\eqref{eq:random_allocation}
     \Else
         \State Take action $A_i \sim \pi^*_i(X_i)$
     \EndIf
     \State Observe outcome $Y_i$
 \EndFor
\end{algorithmic}
\end{algorithm}

To more efficiently learn decision policies in the real world, we now outline our procedure to integrate the LP formulation from Section~\ref{sec:problem} with three common contextual bandit approaches: 
$\varepsilon$-greedy, Thompson sampling, and upper confidence bound (UCB), as described in Algorithm~\ref{alg:online_learning}.
For simplicity, we assume knowledge of the covariate distribution $\DD(X)$, which is often easily obtained from historical data, even in the absence of past interventions.
If historical data are not available, the covariate distribution can instead be estimated from the sample of individuals observed during the decision-making process.

At a high level, at each step $i$, our 
$\varepsilon$-greedy approach 
first estimates 
$\mathcal{D}(Y(a_k) \mid X)$
using the maximum likelihood estimates of a chosen parametric family.
We next use these estimates to find the optimal policy $\pi_i^*$ with our LP.
Then, with probability $1-\varepsilon$, 
we treat the $i$-th individual according to $\pi_i^*$; 
otherwise, with probability $\varepsilon$, we take action $a_k$ with a probability set to meet our budget requirements in expectation.
Our Thompson sampling approach maintains a posterior over the parameters of a model of the potential outcomes $\mathcal{D}(Y(a_k) \mid X)$, samples from this posterior, uses the posterior draw to construct the inputs for our LP, yielding a policy $\pi_i^*$, and then treats the $i$-th individual according to $\pi_i^*$.
Finally, under our UCB approach,
we compute $\pi_i^*$ by solving the LP with optimistic estimates of $r$ and the parity penalities (e.g., using the 97.5th percentile of the posterior of the former and the 2.5th percentile of the latter). 
\subsection{Simulation Study}
\label{sec:simulation}

\begin{figure}[t]
\begin{center}
\centerline{\includegraphics[width=0.6\columnwidth]{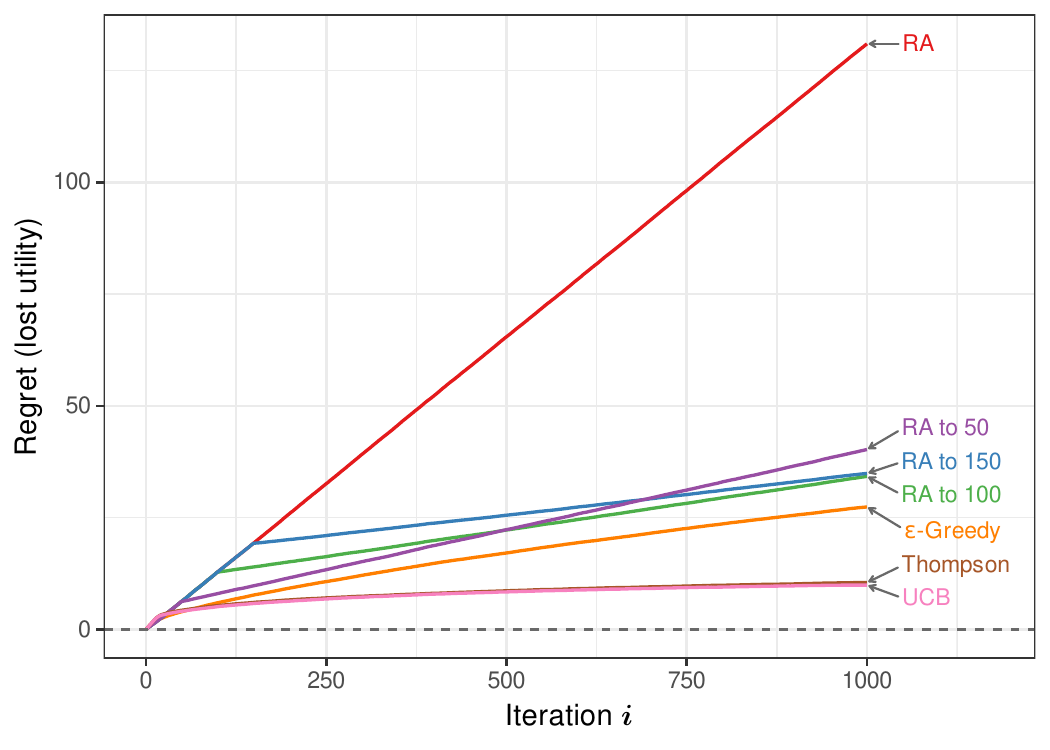}}
\caption{
Mean regret, across 2,000 simulations, incurred by different learning approaches.
We define regret here as the difference between the observed utility and the utility obtained by an oracle during the same experiment.
Values are tightly estimated at each $i$, 
with the 95\% confidence interval no more than 1.1 units off the estimate,
so we omit uncertainty bands for this figure.
We note that the three bandit approaches---$\varepsilon$-greedy, Thompson sampling, and UCB---incur substantially less regret than random assignment (RA).
It is possible to reduce the regret incurred from RA by stopping randomization early, and following the optimal estimated policy from that point forward.
However, these stop-early RA approaches produce worse policies than other approaches (Figure~\ref{fig:pct_of_optimal}).
}
\label{fig:regret}
\end{center}
\end{figure}

\begin{figure}[t]
\begin{center}
\centerline{\includegraphics[width=0.6\columnwidth]{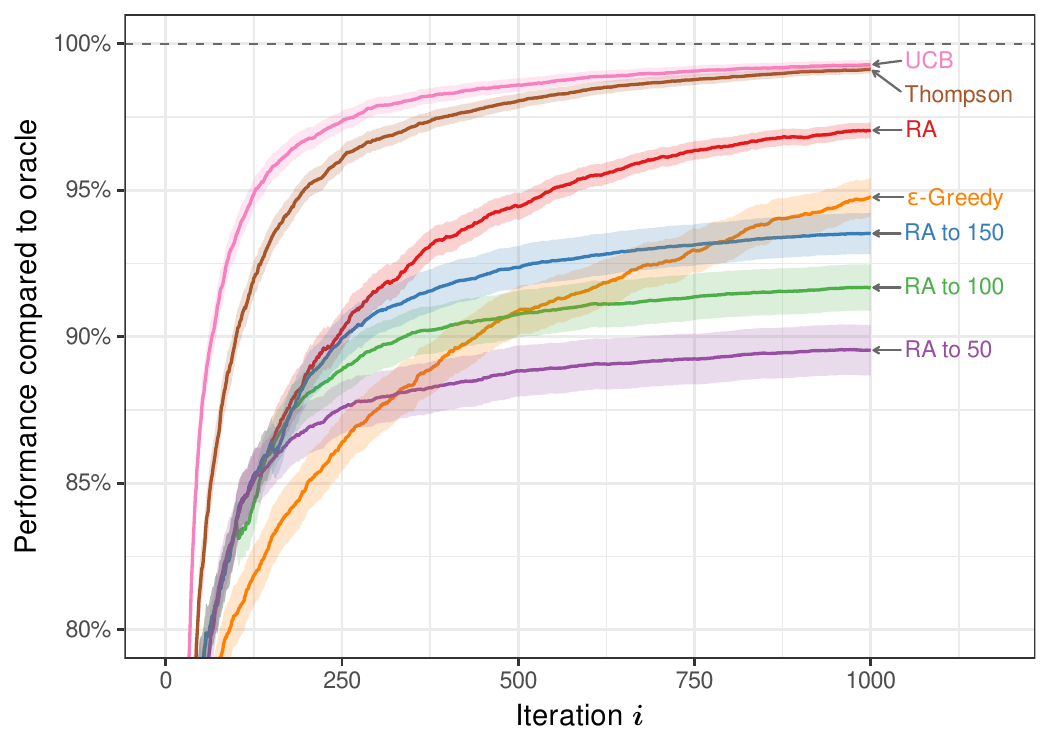}}
\caption{
Mean performance, across 2,000 simulations, of optimal policies estimated with data available at each iteration $i$.
Performance is defined as the additional utility obtained by a policy over a baseline of no treatment for all individuals, with 100\% indicating this quantity for the oracle policy.
Uncertainty bands represent 95\% intervals around the mean.
UCB and Thompson sampling generate policies that are better than random assignment (RA) at any given iteration~$i$.
In contrast, the $\varepsilon$-greedy approach and the stop-early versions of RA generate policies that are slower to (or may never) reach near-oracle performance.
}
\label{fig:pct_of_optimal}
\end{center}
\end{figure}

To evaluate our learning approach above, we conducted a simulation study using data on a sample of clients served by the Santa Clara County Public Defender Office.
In this example, clients can receive one of three mutually exclusive treatments $a_k$: 
round-trip rideshare assistance, 
a transit voucher, 
or no transportation assistance.
We fix our budget to \$50,000 and limit our population to 1,000 clients, resulting in an average per-person budget of \$5.
In line with many government pilot programs,
we assume that this funding is dedicated to learning suitable policies,
and that our hypothetical public defender would be able to provision separate funding later  
to operate a more permanent program.
We set the cost of rides to \$5 per mile.
We limit the client population to white and Vietnamese individuals to reflect our running example.
The utility of a policy is described by
Eq.~\eqref{eq:utility}, 
where we set $r(x, a, y) = y$, $f(x, a, y) = c(x, a)$, and $\lambda = 0.0006$.
This choice yields an oracle policy 
that balances between 
maximizing appearances and achieving parity in per-capita spending across groups.
The data generating process for this population and additional experiment parameters are described in detail in Appendix~\ref{appx:exp_details}.

We compare our contextual bandit approaches against several baselines.
First, we compare to non-adaptive random assignment (RA), in which treatment is randomly selected (in accordance with the budget) throughout the entire learning phase.
The simplicity and versatility of RA makes it a common strategy for learning optimal policies.
We also include partially adaptive variations on this approach, where we run RA on the first $n$ individuals, 
and then follow the optimal policy estimated at individual $n$ for the rest of the sample, similar to explore-first strategies. 
We compare all approaches against an oracle that can observe the true appearance probabilities.

We repeat this evaluation 2,000 times each on 1,000 randomly selected individuals from our dataset, and compare the performance of all approaches using two different metrics.
Our main two bandit approaches---Thompson sampling and UCB---significantly reduce regret when compared to non-adaptive and partially adaptive approaches during the learning phase (Figure~\ref{fig:regret}).
Our bandit approaches also learn policies that, if used for future populations, would outperform non-adaptive and partially adaptive approaches (Figure~\ref{fig:pct_of_optimal}).
In contrast to our two main bandit algorithms, the $\varepsilon$-greedy approach also manages to reduce regret, but is slower to learn a near-oracle policy.
RA and its variations illustrate the limits of the conventional randomized approach.
For example, it is possible to learn a near-oracle policy using classic RA, but this incurs substantial regret during the learning phase.
Though it is possible to reduce this regret by ending RA early, these alternatives learn poorer-performing policies.

By design, the bandit methods discussed above reduce spending disparities during the course of the simulation. 
We demonstrate this by comparing our main simulation to an alternate set of simulations where $\lambda_g = 0$ (Table~\ref{tab:mean_spending}).
For example, with a choice of $\lambda_g = 0.004$, reflecting a mild preference for more equal spending,
we observe that UCB methods spent \$2.21 less on Vietnamese clients than the \$5 population average (i.e., the target budget).
In contrast, with a choice of  $\lambda_g = 0$ (i.e., preferring policies that simply aim to maximize appearances), UCB methods spent \$3.36 less on Vietnamese clients compared to the population average.

\begin{table}[t]
\centering
\begin{tabular}{@{}lll@{}}
\toprule
& \multicolumn{2}{@{}c}{Vietnamese spending disparity}  \\
\cmidrule(r){2-3}
& With penalty & No penalty       \\ 
Method & ($\lambda_g = 0.004$) & ($\lambda_g = 0$) \\ \midrule
UCB & -\$2.21 & -\$3.36 \\ 
Thompson & -\$1.21 & -\$2.19 \\ 
$\epsilon$-Greedy & -\$1.29 & -\$2.38 \\ 
\bottomrule
\end{tabular}
\vskip 0.2in
\caption{
Mean spending disparities by method for Vietnamese clients across 2,000 experiments, including both the main set of simulations (where $\lambda_g = 0.004$) and an alternative set of simulations (with identical parameters to the main set, except where $\lambda_g = 0$).
Disparities are calculated by comparing average spending on Vietnamese individuals to the \$5 average spending on all individuals (i.e., the target budget).
Note that spending disparities are approximately \$1 larger when $\lambda_g = 0$, verifying that the bandit methods we employ in our simulation learn to reduce spending disparities to maximize the policymaker's utility.
}
\label{tab:mean_spending}
\end{table}

\begin{figure}[t]
\begin{center}
\centerline{\includegraphics[width=0.6\columnwidth]{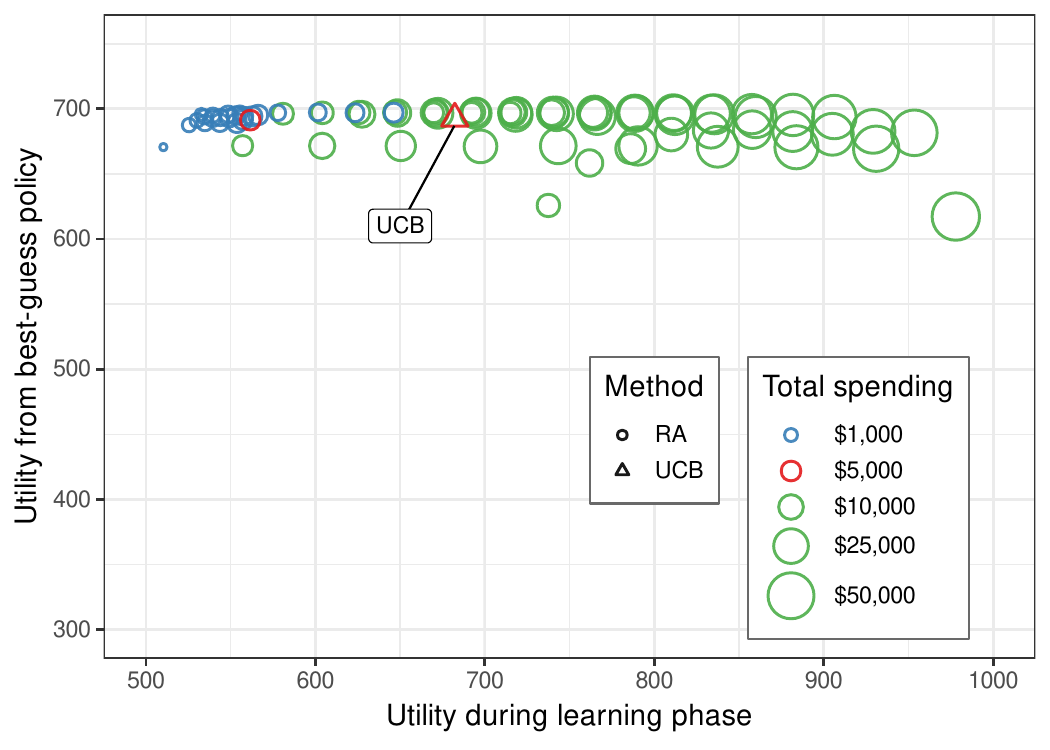}}
\caption{
The effect of varying spending on
outcomes of interest.
Each circle represents average outcomes across 125 simulations of random assignment (RA) with a given allocation.
For the sake of comparison, we also included the average outcome across 2,000 simulations of UCB 
from earlier in this section,
represented here by a single triangle. 
Each glyph is sized by total spending,
with color indicating if these approaches spent less (blue),
equivalent (red),
or more (green)
money compared to the methods discussed earlier in this section.
We find that varying spending mostly affects the utility observed during the learning phase,
but has little effect on the quality of the final policy learned. 
Random assignment strategies that save money (when compared to UCB) do not achieve as much utility during the leraning phase,
though both approaches result in similar-quality policies.
}
\label{fig:spending_variations}
\end{center}
\end{figure}

The bandit approaches we discuss above aim to maximize utility during the learning phase,
but do not explicitly try to minimize money spent during learning. 
As discussed in Section~\ref{sec:sample_bounds_costs},
it is possible that alternate approaches may spend less while achieving similar outcomes.
One could imagine a learning strategy in which nearly all participants were offered the no-cost treatment,
with only a small number offered a costly treatment (a ride or transit voucher).
With these alternate approaches, 
we may be able to learn the structure of appearance behavior by using the no-cost treatment for most participants,
and then learn the impact of costly treatments with a small number of remaining participants.
To explore such alternate approaches empirically, 
we considered a range of policies that assign the first 1,000 clients in each experiment
to one of our three treatment arms in different random allocations.
For example, 
one variation randomly allocated rides to only 2\% of clients,
and transit vouchers to only 2\% of clients, 
with the remaining 96\% of clients receiving the no-cost control action.
Another variation randomly allocated rides for 10\% of clients, and vouchers for 40\% of clients, 
with remaining 50\% of clients receiving the no-cost control action.
Additional details describing this simulation are included at the end of Appendix~\ref{appx:exp_details}.

We show the results of this exercise in Figure~\ref{fig:spending_variations}.
This plot compares three dimensions on which we evaluate each policy:
first, the utility achieved during the learning phase;
second, the quality of the policy learned by the end of the phase;
and third, the total amount spent during learning.
We see that varying spending mostly affects the utility observed during learning,
with more expensive allocations resulting in higher utility during learning.
Spending appears to have little impact on the quality of the final policy learned.
For the sake of comparison,
Figure~\ref{fig:spending_variations} also includes UCB results from the simulations at the beginning of this section.
Among the spending variations we tested, no variation spent less money than UCB while achieving similar utility during the learning phase.
This suggests that UCB can be a cost-effective approach to maximizing utility while learning a high-quality policy.

\section{Discussion}
\label{sec:discussion}
We have outlined a consequentialist framework for equitable algorithmic decision-making.
Our approach foregrounds the role of an expressive utility function that captures preferences for both individual- and group-level outcomes.
In this conceptualization, we explicitly consider the inherent trade-offs between competing objectives in many real-world problems.
For instance, in our running example of allocating transportation assistance to public defender clients, 
there is tension between maximizing appearance rates and equalizing spending across groups.
Popular rule-based approaches to algorithmic fairness---such as enforcing spending parity or equal false negative rates 
across groups---implicitly balance 
these competing objectives in ways that
may be at odds with the actual preferences of stakeholders.
Our approach, in contrast, requires one to confront the consequences of difficult trade-offs, and, in the process, may help one improve those decisions.

For a rich class of utility functions, 
we showed that one can efficiently learn optimal decision policies
by coupling ideas from the contextual bandit and optimization literatures. 
For example, with our UCB-based algorithm, we do so by repeatedly solving a linear program under optimistic estimates of the potential outcomes of actions.
In an empirically grounded simulation study, we showed that this strategy can outperform common alternatives, including learning through random assignment or acting greedily based on the available information.

Our learning algorithm requires access to a well-specified utility function that reflects stakeholder preferences.
In practice, inferring this utility is a complex task in its own right, but the illustrative survey that we conducted shows how one can begin to operationalize this task.
Challenges may arise from an unwillingness to explicitly state preferences for trade-offs involving sensitive considerations like demographic parity.
There are, however, several established techniques to elicit multi-faceted preferences less directly. 
One family of approaches selects pairs of similar realistic scenarios, asks stakeholders to pick their preferred outcome, and infers their preferences from these choices~\citep{koenecke2023,linpreference,furnkranz2010preference, chu2005preference, jung2019algorithmic}.

Another challenge---particularly relevant in the dynamic setting---is accounting for delayed outcomes.
In our running example, we may choose to offer rideshare assistance to a client days or weeks before their appointment date.
As a result, there may be large gaps between when an action is taken and when we observe its outcome. 
Thompson sampling methods have been observed to be more robust to delayed outcomes than upper confidence bound strategies in contextual bandit scenarios~\citep{chapelle2011empirical}. Another way to address this issue is through the use of \textit{proxies} or \textit{surrogates}, in which intermediate outcomes are used as a temporary stand-in for the eventual outcome of interest~\citep{athey2019surrogate}.
For example, with rideshare assistance to clients, one might use intermediate responses (like a client's confirmation to attend their appointment) as a proxy for appearance.
A third approach might be to reduce the budget for costly actions, effectively limiting the resources spent while waiting to observe outcomes.

In addition to the above technical considerations,
we note some practical limitations in providing transportation to public defender clients with upcoming court dates.
First, in many circumstances 
policymakers may not be legally permitted to explicitly use race, ethnicity, or other protected attributes when deciding how to allocate limited resources.
These policymakers may instead focus on other attributes, like geography or socioeconomic status, which may be legally or socially more permissible.
Second, our motivating example presupposes that resources are too limited to aid the entire population of interest. 
If policymakers had enough funding available to assist an entire population, it may not make sense to even consider equalizing per-capita spending across groups of interest, 
given that everyone would receive transportation assistance.
Finally, though this study emphasizes the potential benefits of rideshare assistance for those who have mandatory court dates (e.g., one potential benefit is avoiding time in jail),
a simpler and more effective policy for reducing jail time may be to discourage judges from issuing bench warrants if clients fail to appear in court.
Though in isolation this policy might result in lower appearance rates, it could be accompanied by other assistance to offset this adverse outcome, including text message reminders, social services, or rideshare assistance as we describe here~\citep{chohlas2023automated,fishbane2020behavioral,zottola2023court}.

Algorithms impact individuals both through the decisions they guide and the outcomes they engender.
Looking forward, we hope our work helps to elucidate the subtle interplay between actions and consequences, and, in turn, furthers the design and deployment of equitable algorithms.

\section{Acknowledgements}
We thank Johann Gaebler, Jonathan Lee, Hamed Nilforoshan, Julian Nyarko, and Ariel Procaccia for helpful comments.
We also thank colleagues at the Santa Clara County Public Defender Office for their assistance, 
including Molly O'Neal, Sarah McCarthy, Terrence Charles, and Sven Bouapha.
This work was supported in part by grants from the Stanford Impact Labs and the Stanford Institute for Human-Centered Artificial Intelligence. 
Code to replicate our analysis is available online at: \url{https://github.com/stanford-policylab/learning-to-be-fair}.

\bibliography{refs}

\clearpage
\appendix

\renewcommand\thetable{\thesection.\arabic{table}}
\renewcommand\thefigure{\thesection.\arabic{figure}}
\renewcommand{\theequation}{\thesection.\arabic{equation}}
\setcounter{table}{0}
\setcounter{figure}{0}
\setcounter{equation}{0}

\section*{Appendices}

\section{Absolute Value in an LP Objective}
\label{appendix:LP}
If $(v^*, w^*)$ is a solution to the LP in Eq.~\eqref{eq:abs-lp},
then we claim $v^*$ is a solution to the original optimization problem in Eq.~\eqref{eq:abs-opt}.
Let $\text{OPT}_{\text{abs}}$
and $\text{OPT}_{\text{LP}}$
denote the optima of Eqs.~\eqref{eq:abs-opt} and \eqref{eq:abs-lp} above.
Now, since $w_{g,\ell} = |\beta_{g,\ell}^Tv|$ satisfies the LP constraints,
$\text{OPT}_{\text{abs}} \leq \text{OPT}_{\text{LP}}$.

Conversely, because the LP objective function decreases in $w_{g,\ell}$,
if $\beta_{g,\ell}^Tv^* \geq 0$, 
then $w_{g,\ell}^* = \beta_{g,\ell}^Tv^*$ (since $\beta_{g,\ell}^Tv^* \leq w_{g,\ell}$, and the other two constraints are immediately satisfied in this case).
On the other hand, if $\beta_{g,\ell}^Tv^* \leq 0$, 
then $w_{g,\ell}^* = -\beta_{g,\ell}^Tv^*$ (since $\beta_{g,\ell}^Tv^* \geq -w_{g,\ell}$).
Thus, in either case, $w_{g,\ell}^* = |\beta_{g,\ell}^Tv^*|$,
which implies that 
\begin{equation*}
\text{OPT}_{\text{abs}} = \text{OPT}_{\text{LP}} = \alpha^T v^* - \sum_{g,\ell} \lambda_{g,\ell} |\beta_{g,\ell}^Tv^*|.
\end{equation*}

\section{Group-Specific Threshold Rules}
\label{sec:thresholds}

The LP described in Section~\ref{sec:LP} yields a solution to our general decision-making problem, 
with an arbitrary number of treatment arms and a potentially complex utility function. 
Here we show that in the case of $K=2$ treatments (e.g., with the options corresponding to whether or not one provides rideshare assistance) and parity penalties only on cost, 
optimal decision policies can be expressed in a simple, interpretable form.
Moreover, for a reward function $r$ that decomposes into aggregate and individual components---as in Eq.~\eqref{eq:r}---we can view  optimal policies as group-specific threshold rules.

\begin{theorem}
\label{thm:threshold}
In the setting of Section~\ref{sec:problem}, suppose $K=2$.
Further assume that we only impose parity penalties on the cost: $f_1(x, a, y) = c(x, a)$, where 
$c(x, a_0) = 0$ and
$c(x, a_1) > 0$.
Finally, suppose
$|s(x)| = 1$ (i.e., $\mathcal{G}$ partitions $\mathcal{X}$)
and that $\Delta(x) > 0$, where 
\begin{align*}
    \Delta(x) = \EE_Y[r(x, a_1, Y(a_1)) - r(x, a_0, Y(a_0)) \mid X = x].
\end{align*}
Then, for group-specific constants $t_g$ and $p_g$, there exists an optimal decision policy $\pi^*$ of the form
\begin{equation}
\label{eq:threshold}
\begin{aligned}
    \Pr(\pi^*(x) = a_1) = \left\{
    \begin{array}{ll}
    1 & \Delta(x) / c(x, a_1) >  t_{s(x)} \\
    p_{s(x)} & \Delta(x) / c(x, a_1) = t_{s(x)} \\
    0 & \textup{otherwise}.
    \end{array}
    \right.
\end{aligned}
\end{equation}
\end{theorem}

\begin{proof}
{Proof.}
We start by rewriting the utility $U(\pi)$ as
\begin{align*}
U(\pi) &= \sum_x \EE_Y[r(x, a_0, Y(a_0)) \mid X = x] \cdot \Pr(X = x) \\
& \hspace{1cm} + \sum_x \Delta(x) \cdot \Pr(\pi(x) = a_1) \cdot \Pr(X = x) \\
& \hspace{1cm} - \sum_{g \in \mathcal{G}} \lambda_g \delta_g(\pi),
\end{align*}
where
\begin{equation*}
    \delta_g(\pi) = | \EE_{X}[c(X, \pi(X)) \mid g \in s(X)] - \EE_{X}[c(X, \pi(X))] |.
\end{equation*}
Now, for any policy $\pi$, we construct a threshold policy $\tilde{\pi}$ of the form in Eq.~\eqref{eq:threshold} by assigning to action $a_1$ those $x$ in each group $g$ having the largest values of $\Delta(x) / c(x, a_1)$ such that 
\begin{equation*}
\EE_{X}[c(X, \tilde{\pi}(X)) \mid g \in s(X)] =
\EE_{X}[c(X, \pi(X)) \mid g \in s(X)].
\end{equation*}
By construction, $\delta_g(\tilde{\pi}) = \delta_g(\pi)$, and 
\begin{equation*}
\sum_x \Delta(x) \Pr(\tilde{\pi}(x) = a_1) \Pr(X = x)
\geq
\sum_x \Delta(x) \Pr(\pi(x) = a_1) \Pr(X = x).
\end{equation*}
Consequently, $U(\tilde{\pi}) \geq U(\pi)$,
establishing the result.
$\hfill \square$
\end{proof}

\begin{table}[t]
\centering
\resizebox{\textwidth}{!}{\begin{tabular}{c|c|c|ccc|ccc}
$X$   & $\Pr(X = x)$ & $\EE[Y(0) | X = x]$ & $\EE[Y(1) | X = x]$ & $c(x, a_1)$ & $ \Delta(x) / c(x, a_1) $ & $\EE[Y(2) | X = x]$ & $c(x, a_2)$ & $ \Delta(x) / c(x, a_2) $ \\
\toprule
$x_1$ & 0.1          & 0.1                 & 0.6                 & \$10        & 0.05                      & 0.3                 & \$1         & \cellcolor[HTML]{C0C0C0}0.2                       \\
$x_2$ & 0.9          & 0.1                 & 0.2                 & \$10        & 0.01                      & 0.12                & \$1         & \cellcolor[HTML]{C0C0C0}0.02                     
\end{tabular}}
\caption{Setup for counterexample.}
\label{tab:counterexample}
\end{table}

In the theorem above, we assume $K=2$, which yields a simple threshold solution for the optimal policy.
In general, with $K > 2$, the structure of the optimal policy can be more complicated. 
As a counterexample to the above, suppose $K=3$, with a no-cost baseline action $a_0$, and two costly actions, $a_1$ and $a_2$. 
We further imagine a population with a single group (i.e., $|\mathcal{G}| = 1$) with individuals in two contexts characterized by attributes $x_1$ and $x_2$,
and a utility $U(\pi) = \EE_{X, Y}[Y(\pi(X))]$.
Table~\ref{tab:counterexample} lists the responsiveness of each type of individual to each of the three possible actions, the costs of the two costly actions, and the relative benefit per dollar of the two costly actions over the free baseline action.
In this setup, we set $b = \$1$, i.e., we can spend, on average, one dollar per person.

For both types of individuals in this example, action $a_2$ has the highest relative benefit per dollar over the baseline action $a_0$ (highlighted in Table~\ref{tab:counterexample} in gray).
As such, one intuitive strategy $\pi$ is to treat every individual with $a_2$, exhausting our budget, yielding 
\begin{align*}
    U(\pi) &= (0.3 \cdot 0.1) + (0.12 \cdot 0.9) = 0.138.
\end{align*}
However, in this case, a better strategy $\pi^*$ is to assign action $a_1$ to all individuals of type $x_1$, and to assign $a_0$ to all individuals of $x_2$, exhausting our budget and yielding
\begin{align*}
    U(\pi) &= (0.6 \cdot 0.1) + (0.1 \cdot 0.9) = 0.15.
\end{align*}
In this setup, even though $a_2$ has the highest relative benefit \textit{per dollar}, it is low cost in absolute terms, meaning that we have to treat individuals of both types to exhaust our budget, including individuals of type $x_2$ who see little benefit to the treatment.
As a result, it is better to exhaust one's budget on action $a_1$ with individuals of type $x_1$, which sees a higher return on average than treating the entire population with $a_2$.
This type of scenario demonstrates the need for more complicated solutions to identifying an optimal policy, 
justifying the use of a linear program (or other similar approaches).

More generally, if we were to allow arbitrary utility functions, then finding an optimal decision policy is NP-hard,
as we show below.

\begin{proposition}
\label{prop:hardness}
If we allow arbitrary utility functions $U$ in Eq.~\eqref{eq:opt-problem}, then finding an optimal policy is NP-hard.
\end{proposition}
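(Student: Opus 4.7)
The plan is to polynomially reduce 0-1 Knapsack to the policy-optimization problem in Eq.~\eqref{eq:opt-problem}, exploiting the freedom to choose $U$ outside the restricted form of Eq.~\eqref{eq:utility}. Given a knapsack instance with integer values $v_1,\dots,v_n$, positive integer weights $w_1,\dots,w_n$, and capacity $B$, I would take $\mathcal{X} = \{x_1,\dots,x_n\}$ with $\Pr(X=x_i)=1/n$, $K=2$ actions $\{a_0,a_1\}$ with $c(x_i,a_0)=0$ and $c(x_i,a_1)=n w_i$, and budget $b=B$. Writing $p_i := \Pr(\pi(x_i)=a_1)$, the expected-cost constraint of Eq.~\eqref{eq:opt-problem} collapses to $\sum_i w_i p_i \le B$, so the feasible set is the polytope $P = \{p \in [0,1]^n : \sum_i w_i p_i \le B\}$. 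Restricted to deterministic $\pi$, maximizing $\sum_i v_i p_i$ over $P$ is exactly 0-1 Knapsack, so it only remains to design a utility that forces the optimum of the randomized problem to be deterministic.

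For this I would take $U(\pi) = \sum_i v_i p_i - M \sum_i p_i(1-p_i)$ for an $M$ polynomial in the input bit-length, to be pinned down below. Rewriting $U = \sum_i (v_i - M) p_i + M \sum_i p_i^2$ shows that $U$ is convex in $p$, so its maximum over $P$ is attained at a vertex of $P$. Because the budget is the only nontrivial constraint beyond the box constraints $0 \le p_i \le 1$, standard polyhedral reasoning shows that every vertex of $P$ has at most one fractional coordinate: at any vertex, either all $n$ box constraints are tight (giving $p \in \{0,1\}^n$), or $n-1$ box constraints are tight together with the budget, leaving one free coordinate $p_j$.

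The key technical step—and the main obstacle—is to show that fractional vertices are strictly suboptimal once $M$ is chosen appropriately. If $p_j \in (0,1)$ at a vertex, tightness of the budget forces $p_j = k/w_j$ for some integer $k$ with $0 < k < w_j$, and hence $\min(p_j, 1-p_j) \ge 1/w_j$. Setting $p_j \mapsto 0$ (leaving the other coordinates untouched) stays feasible and changes $U$ by $-v_j p_j + M p_j(1-p_j)$, which is strictly positive whenever $M(1-p_j) > v_j$; the bound $1-p_j \ge 1/w_j$ makes $M > v_j w_j$ sufficient, so taking $M = 1 + (\max_i v_i)(\max_i w_i)$ kills every fractional vertex. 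The optimum is therefore attained at a $\{0,1\}$-vertex of $P$, and the constructed instance of Eq.~\eqref{eq:opt-problem} solves the original 0-1 Knapsack instance, completing the reduction.
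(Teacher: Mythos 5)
Your reduction is correct, but it takes a genuinely different route from the paper. The paper reduces from subset sum and simply \emph{defines} the utility to be the indicator that the set of contexts deterministically assigned to $a_1$ is nonempty and sums to zero; randomized policies never trigger the value-$1$ condition, so no budget constraint, no vertex analysis, and no integrality argument are needed --- the whole proof is a two-line observation that $\max_\pi U(\pi)=1$ iff the subset-sum instance is a yes-instance. You instead reduce from 0--1 Knapsack, encode the knapsack capacity through the genuine expected-cost constraint of Eq.~\eqref{eq:opt-problem}, and handle randomization honestly: the convexity of $U(p)=\sum_i(v_i-M)p_i+M\sum_i p_i^2$ pushes the optimum to a vertex of the budget polytope, vertices have at most one fractional coordinate, and your choice $M>(\max_i v_i)(\max_i w_i)$, combined with the bound $1-p_j\ge 1/w_j$ at a budget-tight fractional vertex, makes zeroing the fractional coordinate a strict improvement, so the optimum value equals the knapsack optimum. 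The steps check out (including the cost scaling $c(x_i,a_1)=nw_i$ that turns the average-cost constraint into $\sum_i w_ip_i\le B$, and the polynomial bit-length of $M$). What your argument buys is a stronger-looking statement: hardness persists even when $U$ is restricted to be a small, explicitly given quadratic polynomial in the policy's marginal action probabilities, rather than the paper's exotic indicator-valued utility; what it costs is length and the need for the polyhedral and penalty machinery that the paper's construction deliberately sidesteps. (Like the paper's subset-sum reduction, yours establishes weak NP-hardness, so neither argument is stronger on that axis.)
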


\begin{proof}
{Proof.}
We reduce to the NP-hard subset sum problem. Given integers 
$x_1, \dots, x_n$, 
consider the policy optimization problem for $K=2$ actions and no budget constraints (i.e., $c(x,a_k) = b = 1$), with utility
\begin{equation*}
U(\pi) = \left\{ 
\begin{array}{cl}
     1 & A(\pi) \neq \emptyset \ \land \displaystyle \sum_{i \in A(\pi)} x_i = 0 \\ 
     0 & \text{otherwise}
\end{array}
\right.
\end{equation*}
where $A(\pi) = \{i : \Pr(\pi(x_i) = a_1) = 1\}$.
Then $\max_{\pi} U(\pi) = 1$ if and only if there exists a non-trivial subset of the integers $\{x_1, \dots, x_n\}$ that sums to zero, establishing the claim.
\end{proof}

\section{Details for Section~\ref{sec:tradeoffs}}
\label{appx:trade_offs}

For clients in our synthetic population, distance from court is determined by folded normal distributions with density: 
\begin{equation*}
g_{\mu,\sigma}(x) = \left\{ 
\begin{array}{cl}
     \frac{2}{\sigma \sqrt{2\pi}}~\exp{-\frac{(x - |\mu|)^2}{2\sigma^2}} & \text{if } x \geq 0,
     \smallskip \\
     0 & \text{otherwise.}
\end{array}
\right.
\end{equation*}
White clients in this population have distances given by the following density:
\[
f_w(x) = g_{\mu_1, \sigma_1}(x),
\] where $\mu_1 = 2$ and $\sigma_1 = 1$.
For Black clients, distance from court is determined by a mixture distribution with a density given by:
\[
f_b(x) = 0.25 \cdot g_{\mu_2, \sigma_2}(x) + 0.75 \cdot g_{\mu_3, \sigma_3}(x),
\]
where $\mu_2 = 1, \sigma_2 = 1, \mu_3 = 10,$ and $\sigma_3 = 5$.

\section{Details for Figure~\ref{fig:survey_results}}
\label{appendix:survey_prompt}

Figure~\ref{fig:survey_results} showed our survey population's preferred allocations of rides in a hypothetical city, 
split by U.S. political party affiliation.
In Figure~\ref{fig:addl_survey_results}, we include two similar plots,
split by self-identified gender and race/ethnicity.
As in Figure~\ref{fig:survey_results},
we observe that members of each group have a wide range of preferences for the allocation of spending,
suggesting that there is no universal optimal tradeoff 
for this scenario and other similar policy domains.\\

\noindent
We solicited respondents for our survey on Prolific with the following prompt:

\begin{quote}
\begin{it}
{\parindent0pt 
    \textbf{Design a Transportation Assistance Policy}\\

    In this study, we will ask you for your opinion on how you would distribute rideshare assistance to help people get to court.\\
    
    You will also be asked to answer standard demographic questions at the end of the survey. \\
    
    Your responses are completely anonymous.
}
\end{it}
\end{quote}

\noindent
Respondents were then presented with the following introduction at the start of their survey:

\begin{quote}
\begin{it}
{\parindent0pt 
\textbf{Court Transportation Survey}\\

Many people who are charged with a crime must appear in court. If they miss court, they are often put in jail. Some of these people miss court because it is difficult for them to get to the courthouse. For instance, they might not be able to afford transportation. And so, by providing those people with free rides to court, it is possible to help them avoid jail time. In this survey, we are interested in your views on how to distribute free rides to court.\\

Imagine a fictional U.S. city called ``Metropolis.'' In Metropolis, Black people typically live farther away from the courthouse than White people. Because long-distance rides are more expensive than short-distance rides, a round-trip ride to court typically costs \$20 for White people and \$80 for Black people.
}
\end{it}
\end{quote}

\noindent
Finally, respondents are shown the following summary alongside the graphic in Figure~\ref{fig:survey-graphic}:

\begin{quote}
\begin{it}
{\parindent0pt 
Now imagine that you live in Metropolis. The city has \$50,000 to provide people with free rides to court. How would you distribute the free rides across the city's Black and White populations? 50\% of the city's inhabitants are Black, and 50\% are White.\\

The figure below shows five options for spending your city's budget. Each one has a different number of rides for White and Black people, but all cost \$50,000. \$50,000 is the annual budget for the next 3 years.\\

Please select your preferred option to distribute the \$50,000. Your responses will remain anonymous. 
}
\end{it}
\end{quote}

\begin{figure}
    \vspace{1.2cm}
    \centering
    \begin{subfigure}[t]{\columnwidth}
        \centering
        \includegraphics[width=0.8\columnwidth]{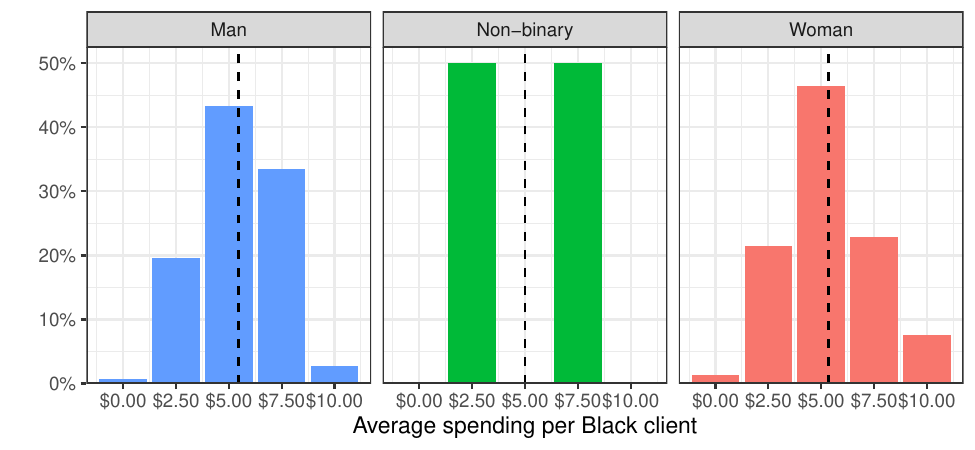}
\label{fig:addl_survey_results_bygender}
    \end{subfigure}
    \vskip 0.1in
    \begin{subfigure}[t]{\columnwidth}
        \centering
        \includegraphics[width=0.8\columnwidth]{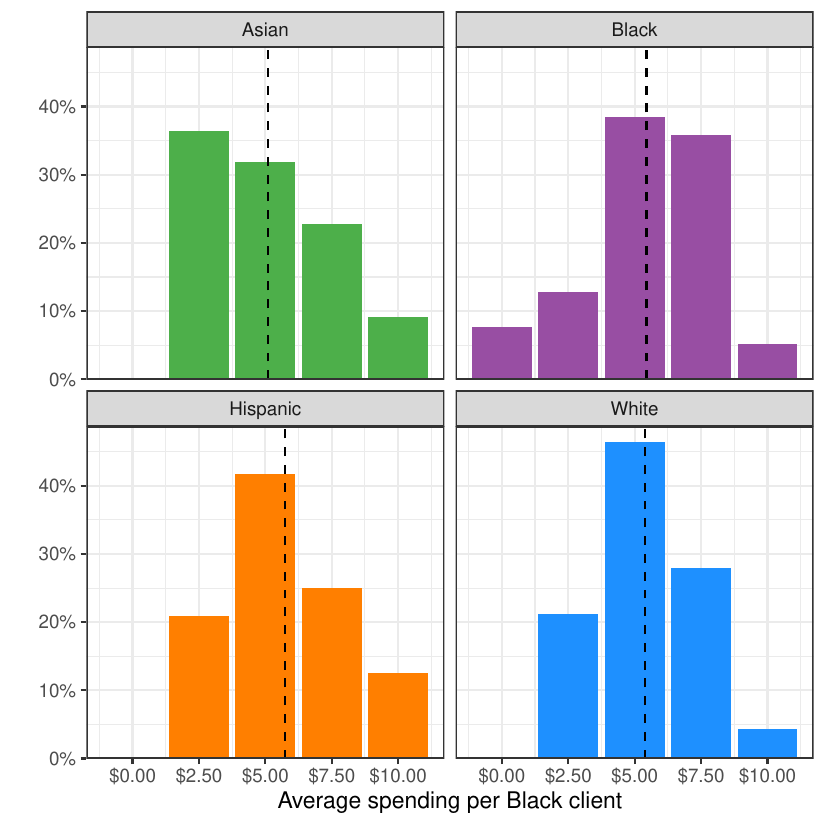}
\label{fig:addl_survey_results_byethnicity}
    \end{subfigure}
    \vskip 0.1in
    \caption{
    Data from the survey discussed in Section~\ref{sec:tradeoffs}.
    Here, we show the distribution of respondent preferences
    for average spending allocated to Black individuals,
    split by gender (top) 
    and race/ethnicity (bottom).
    }
    \label{fig:addl_survey_results}
\end{figure}

\section{Proofs for Section~\ref{sec:sample_bounds}}
\label{appx:proofs_sample_bounds}
In this section we use the shorthand notation $p_x = \P(X=x)$, $\pi_{x k} = \P(\pi(x)=a_k)$, and $r_{xk} = f(x, k)$.

We now use Lemma~\ref{lemma:rew_to_val_err_bound} to prove sample complexity bounds for  our reward settings.

\begin{theorem}[Restatement of Theorem~\ref{thm:rct_tabular}]
Assume the reward is tabular and the costs are known. 
Suppose we collect $n$ samples in a round-robin fashion (i.e., for each context $x$, select the least-sampled action $a_k$ in that context, breaking ties arbitrarily).
Then for $\epsilon > 0$, $\delta > 0$, $\lambda_g \geq 0$,
and
\begin{equation*}
  n  \geq  16\sigma^2 \frac{|X||A|}{\epsilon^2}\ln\frac{4|X||A|}{\delta} \ln \frac{2|X|}{\delta}
\end{equation*}
we have $\P(U(\pistar) - U(\pihat) < \epsilon) > 1- \delta$.
\end{theorem}

\begin{proof}{Proof.}
We proceed by first showing that if we observe, for each context-action, a sufficient number of samples then we can guarantee that a near-optimal policy is computed with high probability. Since lower probability contexts contribute less to the overall utility, a larger estimation error on these lower probability contexts is permissible, and the resulting number of samples needed per context will be proportional to the probability of the context. Since we cannot select contexts to sample (rather they arrive from a known stochastic distribution), we then provide a sufficient bound on the number of total observations (across all contexts) which guarantees that we will observe the sufficient number of samples per state-action computed in the first part.

Starting with the first part, we observe that the sample mean estimator of reward has distribution
\begin{equation}
    \rhat_{xk} = \frac{1}{n_{xk}}\sum_{i=1}^{n_{xk}} R_{xk, i} \sim \subGaussian(\frac{\sigma^2}{n_{xk}^2}).
\end{equation} Recall $p_x$ is our shorthand for the probability of context $x$. First note given $(\epsilon / (2 \sqrt{p_x |X|}) )$-accurate estimates of the reward model for each context $x$ and action $a$,  
Lemma \ref{lemma:rew_to_val_err_bound} shows the resulting utility error will be upper bounded by $\epsilon$: 
\begin{align}
    U(\pistar) - U(\pihat)
        \leq 2 \sum_x p_x \max_k |r_{xk} - \rhat_{x k}| < 2 \sum_x p_x \frac{\epsilon}{2 \sqrt{p_x |X|} } = \frac{\epsilon}{\sqrt{|X|}} \sum_x \sqrt{p_x} \leq \epsilon. 
\end{align}
In the inequality we substituted in the assumed bound on the reward estimation accuracy, and the last inequality uses the  Cauchy-Schwarz inequality.

Using Hoeffding's concentration inequality we can ensure that after $n_{xk}$ samples for each context-action pair, our estimated rewards $\rhat$ satisfy the desired accuracy bound $|r_{xk} - \rhat_{x k}| \leq  \frac{\epsilon}{2 \sqrt{p_x |X|}}$ with probability at least $ 1-\frac{\delta}{2 |X||A|}$, where 
\begin{equation}
n_{x k} = \frac{8\sigma^2 p_x |X|}{\epsilon^2}\log\frac{4|X||A|}{\delta}. \label{eqn:nx}
\end{equation}
Therefore using a union bound over all contexts and actions, we obtain a high probability bound on the utility loss from using the estimated reward models: 
 $\P( U(\pistar) - U(\pihat) \leq \epsilon) > 1-\delta/2$. 

Assuming that actions are sampled in a round robin fashion per context, this means 
\begin{equation}
n_{x} = \frac{8\sigma^2 p_x |X||A|}{\epsilon^2}\log\frac{4|X||A|}{\delta}
\end{equation} 
samples are needed per context. 

We now prove that for sufficiently large $n$,  with high probability, at least $n_x$ samples will be observed per context.

Here it will be helpful to index the contexts which we will label as $x_1,x_2,\ldots,x_{|X|}$. Consider some number $\tilde{n}$ of total data samples , which will be composed of $(\tilde{n}_{x_1},\tilde{n}_{x_2},\ldots,\tilde{n}_{x_{|X|}})$ samples of $(x_1,x_2,\ldots)$ respectively. Then we wish to bound the probability of failure of any of the contexts not to receive the required number of observations:
\begin{equation}
    P(\tilde{n}_{x_1} \leq n_{x_1} \cup \tilde{n}_{x_2} \leq n_{x_2} \cup \ldots \tilde{n}_{x_{|X|}} \leq n_{x_{|X|}} | \tilde{n}) \leq \delta'
\end{equation}
Note we can upper bound the left hand side by the sum of the probability of the individual events. 
\begin{equation}
    P(\tilde{n}_{x_1} \leq n_{x_1} \cup \tilde{n}_{x_2} \leq n_{x_2} \cup \ldots \tilde{n}_{x_{|X|}} \leq n_{x_{|X|}} | \tilde{n}) \leq \sum_{i=1}^{|X|}  P(\tilde{n}_{x_i} \leq n_{x_i} | \tilde{n} )
\end{equation}
We now consider the probability of failure for a particular context $x_i$. For each of the $\tilde{n}$ samples, we can consider a Bernoulli trial, where with probability $p_{x_i}$ we observe context $x_i$, else we observe one of the other contexts. Lemma 56 of \cite{lithesis2009} proves that less than $n_{x_i}$ observations of context $x_i$ will occur, with probability at most $\frac{\delta}{2|X|}$  if the number of samples are at least 
\begin{eqnarray}
    \tilde{n} &\geq&  \frac{2}{p_{x_i}} (n_{x_i} + \ln \frac{2|X|}{\delta}) \label{eqn:nc}
\end{eqnarray}
Note that $n_{x_i} + \ln \frac{2|X|}{\delta} \leq n_{x_i} \ln \frac{2|X|}{\delta}$ if $\delta <= 0.5$ and $n_{x_i} >= 2$ and there are two or more contexts. (If there is only a single context, this is simply a multi-armed bandit problem and this part of the proof is unnecessary.) ($|X|>=2$). Therefore to ensure Equation~\ref{eqn:nc} holds, it is sufficient  
\begin{eqnarray}
    \tilde{n} &\geq& \frac{2}{p_{x_i}} n_{x_i} \ln \frac{2|X|}{\delta} \\
    & = & \frac{2}{p_{x_i}} (8\sigma^2 p_{x_i} \frac{|X|}{\epsilon^2}\ln\frac{4|X||A|}{\delta} \ln \frac{2|X|}{\delta}) \\
    & = & 16\sigma^2  \frac{|X|}{\epsilon^2}\ln\frac{4|X||A|}{\delta} \ln \frac{2|X|}{\delta} \label{eqn:sufficn}
 \end{eqnarray}
 where in the second line,  we have substituted in Equation~\ref{eqn:nx}. Note that this bound on the number of samples $\tilde{n}$ is independent of the particular context $x_i$ we considered, and therefore will hold for all $|X|$ contexts. 

Therefore the probability of any context $x_i$ failing to be observed at least the required $n_{x_i}$ times after $\tilde{n}$ (Equation~\ref{eqn:sufficn}) samples is at most

\begin{equation}
    P(\exists x_i \; \tilde{n}_{x_i} < n_{x_i}  | \tilde{n}) \leq \sum_{i=1}^{|X|} \frac{\delta}{2|X|} = \frac{\delta}{2}
\end{equation}

We now bound the total probability of failure to obtain a near optimal policy as follows:

\begin{eqnarray}
 \P( U(\pistar) - U(\pihat) >  \epsilon)  &=&  \P( U(\pistar) - U(\pihat) >  \epsilon | \tilde{n}_x \geq n_x \forall x ) P(  \tilde{n}_x \geq n_x \forall \;  x ) \\
 & & + \P( U(\pistar) - U(\pihat) >  \epsilon | \exists x \tilde{n}_x < n_x ) P(  \exists x \; \tilde{n}_x < n_x  ) \\
 & \leq & \frac{\delta}{2} \times 1 + 1 \times \frac{\delta}{2} \\
 & \leq & \delta, 
\end{eqnarray}
where in the first inequality we have used the fact that all probabilities are bounded by 1 and substituted in the previous bounds on these failure events. 
\qed
\end{proof}

Now we prove Theorem~\ref{thm:rct_linear}. First, in the following theorem, we prove a result for sample complexity under an arbitrary static (non-adaptive to the observed outcomes) data collection strategy $\pi$. In the following lemma, we then show that we can design a data gathering strategy $\pi$ to achieve a bound that scales roughly like $d^2/\epsilon^2$.

\begin{theorem}[Restatement of Theorem~\ref{thm:rct_linear}]
Assume the reward is linear. 
For any static (non-adaptive to the observed outcomes) strategy $\pi$ used to collect samples,
let 
\begin{align*}
\Sigma(\pi) & = \EE [ \phi(X, \pi(X)) \phi(X, \pi(X))^T] \\
& = \sum_{x,k} \P(X=x) \cdot \P(\pi(x) = a_k) \cdot \phi(x, a_k) \phi(x, a_k)^T
\end{align*}
be the induced covariance matrix.
Define a problem-dependent constant 
$$\rho_0 = \max_{x, k} \norm{\Sigma(\pi)^{-1/2}\phi(x, a_k)}/\sqrt{d}.$$ 
Then, we can design a data collection strategy such that, for any $\epsilon > 0, \delta > 0, \lambda_g \geq 0$
and 
\[n \geq \max\{6 \rho_0^2 d\log(3d/\delta), \BigO{\sigma^2 d^2/\epsilon^2}\} ~ (\text{with cost incurred } c \leq c_{max}n), \]
we have $\P(U(\pistar) - U(\pihat) < \epsilon) > 1- \delta$.
\end{theorem}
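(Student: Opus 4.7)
The plan is to reduce the problem to two standard ingredients in linear bandit analysis: (i) a G-optimal-design-style choice of the data collection policy $\tilde\pi$ so that the uniform leverage $\rho_0(\tilde\pi)$ is small, and (ii) a concentration argument for least-squares estimation of $\theta^\star$ in the induced design. Throughout I will use Lemma~\ref{lemma:rew_to_val_err_bound} to translate a uniform-over-$(x,k)$ reward error of $\epsilon/2$ into the desired bound on $U(\pi^\star)-U(\hat\pi)$. Since $r_{xk}=\phi(x,a_k)^T\theta^\star$ and the plug-in estimate is $\hat r_{xk}=\phi(x,a_k)^T\hat\theta$, by Cauchy--Schwarz
\begin{equation*}
|r_{xk}-\hat r_{xk}|\;\le\;\|\phi(x,a_k)\|_{\hat\Sigma_n^{-1}}\cdot \|\hat\theta-\theta^\star\|_{\hat\Sigma_n},
\end{equation*}
where $\hat\Sigma_n=\sum_{i=1}^n \phi(X_i,A_i)\phi(X_i,A_i)^T$. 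The two factors are controlled separately.

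For the first factor, I would pick $\tilde\pi$ to be (an approximation to) the G-optimal design on the set $\{\phi(x,a_k)\}$ re-weighted by $\P(X=x)$; by the Kiefer--Wolfowitz equivalence this minimizes $\max_{x,k}\|\phi(x,a_k)\|_{\Sigma(\tilde\pi)^{-1}}^2$ and guarantees that it equals $d$, i.e.\ $\rho_0(\tilde\pi)=O(1)$. Then a matrix-Chernoff / Hsu--Kakade--Zhang-type inequality applied to the i.i.d.\ samples $\phi(X_i,A_i)\phi(X_i,A_i)^T$ shows that, as soon as $n\gtrsim \rho_0(\tilde\pi)^2 d\log(d/\delta)$, with probability at least $1-\delta/2$ we have $\hat\Sigma_n\succeq \tfrac12 n\,\Sigma(\tilde\pi)$. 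This is precisely where the first term $6\rho_0(\tilde\pi)^2 d\log(3d/\delta)$ in the sample bound enters, and it yields
\begin{equation*}
\max_{x,k}\|\phi(x,a_k)\|_{\hat\Sigma_n^{-1}} \;\le\;\sqrt{2/n}\cdot \max_{x,k}\|\phi(x,a_k)\|_{\Sigma(\tilde\pi)^{-1}} \;\le\;\sqrt{2d/n}\cdot\rho_0(\tilde\pi).
\end{equation*}

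For the second factor, a standard self-normalized least-squares bound for subGaussian noise gives $\|\hat\theta-\theta^\star\|_{\hat\Sigma_n}^2\lesssim \sigma^2\bigl(d+\log(1/\delta)\bigr)$ with probability at least $1-\delta/2$. Multiplying the two bounds yields
\begin{equation*}
\max_{x,k}|r_{xk}-\hat r_{xk}| \;\lesssim\; \sigma\,\rho_0(\tilde\pi)\sqrt{\tfrac{d(d+\log(1/\delta))}{n}},
\end{equation*}
so demanding that this be at most $\epsilon/2$ forces $n\gtrsim \sigma^2 d^2/\epsilon^2$ (with $\rho_0(\tilde\pi)=O(1)$ under the G-optimal design). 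Taking the maximum of the two sample requirements and applying Lemma~\ref{lemma:rew_to_val_err_bound} together with a union bound over the two high-probability events completes the proof.

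The main obstacle I expect is the explicit construction and analysis of $\tilde\pi$. The set $\{\phi(x,a_k)\}$ need not satisfy the technical conditions typically assumed in the Kiefer--Wolfowitz literature (e.g.\ compactness of the design region, boundedness of features), so one has to be careful about whether the G-optimal design can be realized by an actual context-dependent randomization $\tilde\pi$ over $\mathcal A$ given the fixed context distribution $\mathcal D_X$, or whether one needs an approximate/Frank--Wolfe-based design. Handling this cleanly, and checking that the resulting $\rho_0(\tilde\pi)$ is indeed $O(1)$ (as opposed to degrading with problem-dependent quantities like $\lambda_{\min}(\mathbb E[\phi\phi^T])$), is the delicate step; the matrix concentration and self-normalized pieces are then essentially off-the-shelf.
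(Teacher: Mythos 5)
Your high-level reduction (Lemma~\ref{lemma:rew_to_val_err_bound}, Cauchy--Schwarz, a Kiefer--Wolfowitz-style design, and concentration for the least-squares estimate) is the same skeleton the paper uses, but there is a genuine gap at exactly the step you flag as ``delicate,'' and it is not a technicality that goes through: in this setting the context marginal $\P(X=x)$ is \emph{fixed} and only the conditional action distribution can be designed, so no choice of $\pitilde$ can guarantee $\rho_0(\pitilde)=O(1)$. A rare context whose features point in a direction poorly covered by $\Sigma(\pitilde)$ forces $\norm{\Sigma(\pitilde)^{-1/2}\phi(x,a_k)}$ to be large no matter how the actions are randomized (the paper notes $\rho_0$ can scale with the condition number of the covariance). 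Because you pass from the lemma to the \emph{uniform} bound $\max_{x,k}|r_{xk}-\hat r_{xk}|$ and then multiply by $\max_{x,k}\norm{\phi(x,a_k)}_{\hat\Sigma_n^{-1}}\le\sqrt{2d/n}\,\rho_0(\pitilde)$, your final sample requirement is really $\BigO{\rho_0^2\sigma^2 d^2/\epsilon^2}$, which does not match the stated $\epsilon$-term $\BigO{\sigma^2 d^2/\epsilon^2}$ (where $\rho_0$ enters only the burn-in term).

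The paper avoids this by keeping the \emph{context-averaged} intermediate quantity from Lemma~\ref{lemma:rew_to_val_err_bound}, namely $2\sum_x p_x\max_k|r_{xk}-\hat r_{xk}|\le 2\norm{\hat\theta-\thetastar}_{\Sigma(\pi)}\,c(\pi)$ with $c(\pi)=\sum_x p_x\max_k\norm{\phi(x,a_k)}_{\Sigma(\pi)^{-1}}$, and proving a \emph{modified} Kiefer--Wolfowitz result (Lemma~\ref{lemma:kw}): maximizing $\log\det\Sigma(\pi)$ over context-conditioned policies yields $\sum_x p_x\max_k\norm{\phi(x,a_k)}^2_{\Sigma(\pitilde)^{-1}}\le d$, hence $c(\pitilde)\le\sqrt{d}$ by convexity---an average-over-$x$ guarantee that survives the fixed context distribution even though the uniform-in-$x$ guarantee does not. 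Estimation error in the population norm $\norm{\hat\theta-\thetastar}_{\Sigma(\pi)}$ is then controlled directly via the random-design bound of \citet{hsu2014random}, whose burn-in condition is where $6\rho_0^2 d\log(3d/\delta)$ comes from. Your empirical-covariance route (matrix Chernoff giving $\hat\Sigma_n\succeq\tfrac12 n\Sigma(\pitilde)$ plus a self-normalized bound) could be made to work, but only if you likewise retain the weighted sum $\sum_x p_x\max_k\norm{\phi(x,a_k)}_{\hat\Sigma_n^{-1}}$ rather than the max over $(x,k)$, and replace the unsubstantiated claim $\rho_0(\pitilde)=O(1)$ with a proof of the constrained design guarantee $c(\pitilde)\le\sqrt{d}$.
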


\begin{proof}{Proof.}
    Let $\thetahat$ be the linear regression estimator and $\rhat_{xk} = \dotprod{\phi(x, k), \thetahat}$. Then by Theorem 1 of \cite{hsu2014random} we have that for $n \geq 6\rho_0^2d\log\frac{3d}{\delta}$,
    
    \begin{equation} \label{eq:hsu_bound}
        \norm{\thetahat - \thetastar}_{\Sigma(\pi)}^2 \leq \frac{\sigma^2(d + 2\sqrt{d\log\frac{3}{\delta}} + 2\log\frac{3}{\delta})}{n} + o(1/n)
    \end{equation}
    with probability at least $1-\delta$. Now by Lemma \ref{lemma:rew_to_val_err_bound} and Cauchy-Schwarz, we have that 
    \begin{equation}
    \label{eqn:lin}
        U(\pistar) - U(\pihat) \leq 2 \sum_x p_x \max_k |r_{xk} - \rhat_{x k}| \leq 2 \norm{\thetahat - \thetastar}_{\Sigma(\pi)} \sum_x p_x \max_k \norm{\phi(x,k)}_{\Sigma(\pi)^{-1}}.
    \end{equation}
    Let $c(\pi) = \sum_x p_x \max_k \norm{\phi(x,k)}_{\Sigma(\pi)^{-1}}$ be a data-dependent constant.  From Lemma~\ref{lemma:kw}, there exists a data gathering strategy $\tilde{\pi}$ such that $c(\tilde{\pi}) \leq \sqrt{d}$.
    
    Combining this with Equation~\ref{eqn:lin} and Equation~\ref{eq:hsu_bound}, we obtain that if we collect at least
    $$n \geq \maximum{ 6\rho_0^2d\log\frac{3d}{\delta}, \BigO{\frac{\sigma^2d^2}{\epsilon^2}}}$$ under data collection strategy $\pitilde$ then $U(\pistar) - U(\pihat) \leq \epsilon$ with probability at least $1-\delta$.
    
\end{proof}

\begin{lemma}[Modified Kiefer-Wolfowitz Theorem]\label{lemma:kw}
    Let $\Pi = \{\pi \in \Reals^{|X|*K}| \sum_{k}\pi_{xk} = 1, ~ \forall x\}$ be the set of context-conditioned policy distributions.
Then for any context distribution and feature space, we can design a contextual data collection strategy $\tilde{\pi} \in \Pi$ such that 
    \[c(\tilde{\pi}) = \sum_x p_x \max_k \norm{\phi(x, k)}_{\Sigma(\pitilde)^{-1}} \leq \sqrt{d}.\]
\end{lemma}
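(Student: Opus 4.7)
The plan is to take $\tilde\pi$ to be a constrained $D$-optimal design, namely a maximizer of the concave program
\[ \max_{\pi \in \Pi}\ \log\det \Sigma(\pi), \]
and to read off the claimed bound from the associated first-order optimality conditions (the analog of the Kiefer--Wolfowitz equivalence theorem, adapted to the constraint that $\pi$ have context marginal fixed at $p_x$). If $\Sigma(\pi)$ is singular for every feasible $\pi$, I would first project onto $V = \operatorname{span}\{\phi(x,k) : x, k\}$ and work with the pseudo-inverse there, replacing $d$ by $\dim V \leq d$ throughout; this is lossless since all the features of interest live in $V$.

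Next, I would derive the KKT conditions. Associating multipliers $\mu_x$ to the equality constraints $\sum_k \pi(k\mid x) = 1$ and $\lambda_{x,k}\geq 0$ to $\pi(k\mid x) \geq 0$, stationarity reads $p_x\,\phi(x,k)^T \Sigma(\tilde\pi)^{-1} \phi(x,k) = \mu_x - \lambda_{x,k}$, so complementary slackness gives, for every $x$ and $k$,
\[ \|\phi(x,k)\|^2_{\Sigma(\tilde\pi)^{-1}} \;\leq\; M_x \qquad \text{where } M_x := \mu_x/p_x, \]
with equality on the support of $\tilde\pi(\cdot\mid x)$. Since the support is nonempty, $\max_k \|\phi(x,k)\|^2_{\Sigma(\tilde\pi)^{-1}} = M_x$ exactly.

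The trace identity
\[ d \;=\; \operatorname{tr}(\Sigma(\tilde\pi)^{-1}\Sigma(\tilde\pi)) \;=\; \sum_{x,k} p_x\,\tilde\pi(k\mid x)\,\|\phi(x,k)\|^2_{\Sigma(\tilde\pi)^{-1}} \;=\; \sum_x p_x M_x, \]
whose last equality uses that the KKT inequality is tight on the support, then yields $\sum_x p_x \max_k \|\phi(x,k)\|^2_{\Sigma(\tilde\pi)^{-1}} = d$. Applying Jensen's inequality to $\sqrt{\cdot}$ with the probability weights $\{p_x\}$ gives
\[ \sum_x p_x \max_k \|\phi(x,k)\|_{\Sigma(\tilde\pi)^{-1}} \;\leq\; \sqrt{\,\sum_x p_x \max_k \|\phi(x,k)\|^2_{\Sigma(\tilde\pi)^{-1}}\,} \;=\; \sqrt{d}, \]
which is the claim.

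The main obstacle I anticipate is the degeneracy of $\Sigma(\pi)$: when it is singular, $\log\det\Sigma(\pi) = -\infty$ and the KKT argument is not literally well-defined. Beyond the projection onto $V$ noted above, one can also regularize the objective to $\log\det(\Sigma(\pi) + \varepsilon I)$, run the KKT calculation verbatim, and pass $\varepsilon \to 0$ using continuity of the optimizer and compactness of $\Pi$. A minor care-point is verifying that the max identity $\max_k \|\phi(x,k)\|^2_{\Sigma(\tilde\pi)^{-1}} = M_x$ follows from equality on the (nonempty) support; this is immediate from complementary slackness, but is worth stating to make the chain of identities clean.
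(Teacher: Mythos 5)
Your proposal is correct and takes essentially the same route as the paper: choose $\tilde\pi$ to maximize $\log\det\Sigma(\pi)$ over $\Pi$, use first-order optimality (your explicit KKT multipliers and complementary slackness are equivalent to the paper's variational inequality $\langle\nabla\log\det\Sigma(\tilde\pi),\,\pi-\tilde\pi\rangle\le 0$ tested against the policy concentrated on each context's argmax action), combine with the trace identity $\sum_{x,k}p_x\tilde\pi_{xk}\|\phi(x,k)\|^2_{\Sigma(\tilde\pi)^{-1}}=d$ to conclude $\sum_x p_x\max_k\|\phi(x,k)\|^2_{\Sigma(\tilde\pi)^{-1}}=d$, and finish with Jensen. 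Your handling of a possibly singular $\Sigma(\pi)$ by restricting to the span of the features is a minor care-point the paper leaves implicit, and it only strengthens the bound.
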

\begin{proof}{Proof.}
    We adapt the Kiefer-Wolfowitz Theorem concerning G-optimal experimental designs for our setting where we do not have full control over the sampling distribution, but rather can only control the policy distribution (not the context distribution). 
    
    For our proof, define $g(\pi) = \sum_x p_x \max_k \norm{\phi(x, k)}^2_{\Sigma(\pi)^{-1}}$. Our goal will be to show that $\min_{\pi \in \Pi} g(\pi) = g(\pitilde) = d$ and by convexity, 
    \begin{equation}
        c(\pitilde)^2 = (\sum_x p_x \max_k \norm{\phi(x, k)}_{\Sigma(\pitilde)^{-1}})^2 \leq \sum_x p_x \max_k \norm{\phi(x, k)}^2_{\Sigma(\pitilde)^{-1}} = g(\pitilde) \leq d.
    \end{equation}
    To show this we will first optimize $f(\pi) = \log \det \Sigma(\pi)$ and then show that $f(\pi)$ and $g(\pi)$ have the same optimizer $\pitilde$ and that $f(\pitilde)=g(\pitilde)=d$. Note that 
        \begin{align}
            \frac{\partial}{\partial \pi_{xk}} f(\pi) 
                &= \frac{1}{\det \Sigma(\pi)} \frac{\partial}{\partial \pi_{xk}} \det \Sigma(\pi) \\
                &= \textrm{trace}\parens*{\frac{\textrm{adj}(V(\pi))}{\det \Sigma(\pi)}p_x\phi(x, k)\phi(x, k)^T} \\
                &= \textrm{trace}\parens*{\Sigma(\pi)^{-1}p_x\phi(x, k)\phi(x, k)^T} \\
                &= p_x \norm{\phi(x, k)}^2_{\Sigma(\pi)^{-1}}.
        \end{align}
    Since $f$ is concave, by first order optimality conditions, for any $\pi \in \Pi$ and $\pitilde = \argmax_{\pi \in \Pi} f(\pi)$,
    \begin{align}
        0 \geq \dotprod{\nabla f(\pitilde), \pi-\pitilde} &= \sum_x p_x\sum_k \pi_{xk} \norm{\phi(x, k)}^2_{\Sigma(\pitilde)^{-1}} - \sum_x p_x\sum_k \pitilde_{xk} \norm{\phi(x, k)}^2_{\Sigma(\pitilde)^{-1}} \\
        &= \sum_x p_x\sum_k \pi_{xk} \norm{\phi(x, k)}^2_{\Sigma(\pitilde)^{-1}} - d
    \end{align}
    since for any $\pi$,
    \begin{equation}
    \sum_x p_x\sum_k \pi_{xk} \norm{\phi(x, k)}^2_{\Sigma(\pi)^{-1}} = \textrm{trace}(\sum_x p_x\sum_k \pi_{xk} \phi(x, k)\phi(x, k)^T\Sigma(\pi)^{-1}) = \textrm{trace}(I_d) = d.
    \end{equation}
    Thus letting $\pi_{xk} = \indic{k=\argmax_{k'}\norm{\phi(x, k')}_{\Sigma(\pitilde)^{-1}}}$ we have that 
    \begin{equation}
        g(\pitilde) = \sum_x p_x \max_k \norm{\phi(x, k)}^2_{\Sigma(\pitilde)^{-1}} \leq d.
    \end{equation}
    But it also follows that for any $\pi$,
    \begin{equation}
        g(\pi) = \sum_x p_x \max_k \norm{\phi(x, k)}^2_{\Sigma(\pi)^{-1}} \geq \sum_x p_x\sum_k \pi_{xk} \norm{\phi(x, k)}^2_{\Sigma(\pi)^{-1}} = d.
    \end{equation}
    Therefore $\pitilde$ minimizes $g(\pi)$ and $g(\pitilde) = d$.     Q.E.D.
        
\end{proof}

We note that if we know the context distribution we can efficiently solve for $\pistar$ since by the arguments of Lemma~\ref{lemma:kw} we can solve the equivalent optimization problem 
\begin{align}
\optimax{\pi}{\log\det\Sigma(\pi)}{0 \preceq \pi \preceq 1 \notag}
\end{align}
where $\Sigma(\pi) = \sum_x p_x \sum_k \pi_{xk} \phi(x, k) \phi(x, k)^T$. This is an example of a determinant maximizing problem subject to linear matrix constraints, which can be solved efficiently by interior point methods (\cite{maxdet_lmi}).

It is also possible to derive sample complexity bounds when the reward model is a logistic model. To our knowledge, though there has been some work on approaches for cumulative regret minimization under logistic regression, there has not yet been attention to sample complexity bounds for optimal policy estimation in logistic models. Our results here build on work for logistic function estimation but are somewhat restricted, as these results depend on problem-specific constants which depend on the data collection strategy. The provided result does not provide an algorithmic solution and resulting bound, but it does suggest, as do our experiments, that it may be possible to do this quite efficiently.

\begin{theorem}
Assume the reward is logistic, the costs are known, and that the assumptions D0, D1, D2, and C of \cite{ostrovskii2020finitesample} hold (these assumptions define problem-dependent constants $K0, K1, K2, \rho$). Also define $\Sigma$ and $r$ as in Theorem~\ref{thm:rct_linear}. 
Then, for any $\epsilon > 0, \delta > 0, \lambda_g \geq 0$
and 
\[n \geq \BigO{\max\{K_2^4(d + \log\frac{1}{\delta}), \rho K_0^2 K_1^2 d^2 \log\frac{d}{\delta},(\rho^2 r^2 K_1^2 d \log\frac{1}{\delta})/\epsilon^2\}}\]
we have $\P(U(\pistar) - U(\pihat) < \epsilon) > 1- \delta$.

\label{thm:rct_logistic}
\end{theorem}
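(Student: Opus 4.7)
The plan is to mirror the structure of the proof of Theorem~\ref{thm:rct_linear}, replacing the linear regression concentration inequality of Hsu et al. with the finite-sample MLE bound for logistic regression due to \citet{ostrovskii2020finitesample}. The first two steps are essentially identical to the linear case; the real work is importing the logistic concentration result and bookkeeping the problem-dependent constants so that the three terms inside the $\max$ in the sample bound emerge transparently.

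First, I would apply Lemma~\ref{lemma:rew_to_val_err_bound} to reduce the utility gap to a bound on pointwise reward error: $U(\pistar) - U(\pihat) \leq 2 \sum_x p_x \max_k |r_{xk} - \rhat_{xk}|$. In the logistic model, $r_{xk} = \sigma(\phi(x, a_k)^T \thetastar)$ and $\rhat_{xk} = \sigma(\phi(x, a_k)^T \thetahat)$, where $\sigma$ is the logistic link and $\thetahat$ is the MLE. Because $\sigma$ is $\tfrac{1}{4}$-Lipschitz, I have $|r_{xk} - \rhat_{xk}| \leq \tfrac{1}{4} |\phi(x, a_k)^T (\thetahat - \thetastar)|$, and then Cauchy--Schwarz in the $\Sigma(\pi)$ inner product gives $|r_{xk} - \rhat_{xk}| \leq \tfrac{1}{4} \|\phi(x, a_k)\|_{\Sigma(\pi)^{-1}} \cdot \|\thetahat - \thetastar\|_{\Sigma(\pi)}$. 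Substituting this back produces the bound
\begin{equation*}
U(\pistar) - U(\pihat) \;\leq\; \tfrac{1}{2}\, c \cdot \|\thetahat - \thetastar\|_{\Sigma(\pi)},
\end{equation*}
with $c = \sum_x \P(X = x) \max_k \|\phi(x, a_k)\|_{\Sigma(\pi)^{-1}}$ as defined in the theorem.

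Next, I would invoke the finite-sample guarantee of \citet{ostrovskii2020finitesample} under assumptions D0, D1, D2, and C. Their result, applied to our setting, guarantees that once $n$ exceeds two preconditions of the form $n \gtrsim K_2^4(d + \log(1/\delta))$ (needed so the empirical Hessian is close to its population counterpart and the MLE is well-defined) and $n \gtrsim \rho K_0^2 K_1^2 d^2 \log(d/\delta)$ (needed for the high-probability concentration event), the MLE satisfies $\|\thetahat - \thetastar\|_{\Sigma(\pi)}^2 = \BigO{\rho^2 K_1^2\, d \log(1/\delta)/n}$ with probability at least $1-\delta$. Plugging this into the display above and solving $\tfrac{1}{2} c \cdot \BigO{\sqrt{\rho^2 K_1^2 d \log(1/\delta)/n}} \leq \epsilon$ for $n$ produces the third term in the $\max$, namely $n \gtrsim \rho^2 c^2 K_1^2 d \log(1/\delta)/\epsilon^2$. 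Taking the maximum of all three requirements yields the bound in the theorem statement.

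The main obstacle is carefully importing the Ostrovskii--Bach bound: their guarantee involves several problem-dependent constants ($K_0$ bounding the feature norm, $K_1$ bounding the gradient scale, $K_2$ controlling self-concordance, and $\rho$ a curvature/conditioning constant), and one has to verify that the preconditions for their bound indeed align with the first two terms in our $\max$ while the rate itself, after Lipschitz and Cauchy--Schwarz, yields the third. A subtlety is that the $\Sigma(\pi)$-norm used here is the one induced by the RCT sampling distribution rather than a Fisher information matrix at $\thetastar$; one either restricts the result to a setting where the two are equivalent up to constants absorbed into $\rho$, or one re-derives the relevant step by passing through the Hessian of the logistic log-likelihood, which adds a variance factor $\sigma(\cdot)(1-\sigma(\cdot))$ bounded by $1/4$ and absorbed into the constants.
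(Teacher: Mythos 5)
Your proposal is correct and follows essentially the same route as the paper: reduce the utility gap via Lemma~\ref{lemma:rew_to_val_err_bound}, bound the pointwise reward error by Lipschitzness of the logistic link plus Cauchy--Schwarz, and invoke the Ostrovskii--Bach finite-sample MLE guarantee, with assumption C supplying the $\rho$ factor that relates the Hessian norm $H = \nabla^2 L_\pi(\thetastar)$ to $\Sigma(\pi)$. The only cosmetic difference is bookkeeping: the paper keeps the MLE bound in the $H$-norm and converts $\norm{\phi(x,a_k)}_{H^{-1}}$ to $\rho\norm{\phi(x,a_k)}_{\Sigma^{-1}}$, whereas you fold the same conversion into the parameter-error bound---the subtlety you flag in your last paragraph is exactly how the paper resolves it, and the resulting sample bound is identical.
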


\begin{proof}{Proof.}
    By Theorem 3.1 of \cite{ostrovskii2020finitesample} (in the well-specified case), for $n \geq \BigO{\max\{K_2^4(d + \log\frac{1}{\delta}), \rho K_0^2 K_1^2 d^2 \log\frac{ed}{\delta}\}}$ with probability at least $1-\delta$,
    \begin{equation}
        \norm{\thetahat_n - \thetastar}^2_H \leq \frac{K_1^2 d\log\frac{e}{\delta}}{n}
    \end{equation}
    where $H = \nabla^2 L_{\pi}(\theta^*)$ is the Hessian of the cross-entropy loss evaluated at the true parameter. By assumption C of \cite{ostrovskii2020finitesample}, we assume that that the covariance matrix $\Sigma = \textrm{Cov}_{\pi}[\phi(X, A)]$ is bounded above by $H$ by a data-dependent factor $\rho$,  i.e. that $\rho H - \Sigma$ is positive semi-definite. Thus by Lemma \ref{lemma:rew_to_val_err_bound},
    \begin{align}
        U(\pistar) - U(\pihat)
            &\leq 2 \sum_x p_x \max_k |r_{xk} - \rhat_{x k}| \\
            &\leq 2 \norm{\thetahat_n - \thetastar}_H \sum_x p_x \max_k \norm{\phi(x, k)}_{H^{-1}} \\
            & \leq 2 \sqrt{\frac{K_1^2 d\log\frac{e}{\delta}}{n}} \sum_x p_x \max_k \rho\norm{\phi(x, k)}_{\Sigma^{-1}}\\
            & \leq 2 r\rho\sqrt{\frac{K_1^2 d\log\frac{e}{\delta}}{n}}.
    \end{align}
    Thus it follows that if $n \geq \BigO{\max\{K_2^4(d + \log\frac{1}{\delta}), \rho K_0^2 K_1^2 d^2 \log\frac{ed}{\delta}, \frac{\rho^2 r^2 K_1^2 d }{\epsilon^2}\log\frac{e}{\delta}\}}$ then $\P(U(\pistar) - U(\pihat) \leq \epsilon) \geq 1-\delta$.     Q.E.D.

\end{proof}

We note that the assumptions D1 and D2 are quite restrictive (as explained in Remark 2.2 of \cite{ostrovskii2020finitesample}). The corresponding constants $K1, K2$ can depend on the magnitude true parameter $\thetastar$ and the data collection policy $\pi$. The authors also note that bounding these constants can be non-trivial, even when the context distribution is known. This makes designing a data collection strategy $\pi$ that minimizes the higher order terms of the derived upper bounds much more difficult than in the linear setting, since this includes $K1, \rho, c$ which all depend on $\pi$.

We now consider bounds on the cost needed to learn a near-optimal policy in the tabular setting. 

\begin{corollary}[Cost Upper Bounds in Tabular Setting]\label{cor:cost_upper_bounds_tabular}
Suppose we take samples according to the same strategy as described in Theorem~\ref{thm:rct_tabular}, but after getting $n_{x k}$ samples for context $x$ and action $k$, we only take the least costly action for that context. Then it is sufficient to spend experimental cost of 

 \begin{align}
        \sum_{x k} c(x, a_k) \frac{8\sigma^2}{\epsilon^2}\log\frac{4|X||A|}{\delta} + n_{extra} \max_{x} \min_{k} c(x, a_k)
    \end{align}
    where 
    \begin{align}
        n_{extra} = \frac{8\sigma^2|A|}{\epsilon^2} \log\frac{4|X||A|}{\delta} \parens*{ \frac{1}{ p_{\text{min}}}\log\parens*{ \frac{16\sigma^2|A|}{\delta \epsilon^2 p_{\text{min}}} \log\frac{4|X||A|}{\delta}} - |X|}
    \end{align}
to learn a policy $\pihat$ such that $\P(U(\pistar) - U(\pihat) < \epsilon) > 1- \delta$.
\end{corollary}

\begin{proof}{Proof.}
    By the arguments of Theorem~\ref{thm:rct_tabular} we need at least $n_{x k} \geq \frac{8\sigma^2}{\epsilon^2}\log\frac{4|X||A|}{\delta}$ samples per context action pair. Thus we will need to incur at least a cost of $\sum_{x k} c(x, a_k) \frac{8\sigma^2}{\epsilon^2}\log\frac{4|X||A|}{\delta}$.
    
    However, as we argued, we need to take additional samples in order to ensure that we get sufficient samples for \textit{every} context action pair. The cost incurred for these additional samples will be random, since we don't know which contexts and actions we will need to sample. In the worst case, we will incur costs $\max_{x} \min_{k} c_{x k}$ for all the remaining samples, and thus the total worst case cost will be 
    
    \begin{align}
        c \leq \sum_{x k} c(x, a_k) \frac{8\sigma^2}{\epsilon^2}\log\frac{4|X||A|}{\delta} + n_{extra} \max_{x} \min_{k} c(x, a_k)
    \end{align}
    where 
    \begin{align}
        n_{extra} = \frac{8\sigma^2|A|}{\epsilon^2} \log\frac{4|X||A|}{\delta} \parens*{ \frac{1}{ p_{\text{min}}}\log\parens*{ \frac{16\sigma^2|A|}{\delta \epsilon^2 p_{\text{min}}} \log\frac{4|X||A|}{\delta}} - |X|}
    \end{align}
    is the number of extra samples that need to be taken beyond the minimum required to ensure at least $n_{x k} \geq \frac{8\sigma^2}{\epsilon^2}\log\frac{4|X||A|}{\delta}$ samples per context-action pair. Q.E.D.
\end{proof}

Note that if there exists a $0$-cost action in every context, then the bound simplifies to $\sum_{x k} c(x, a_k) \frac{8\sigma^2}{\epsilon^2}\log\frac{4|X||A|}{\delta}$. Also if the costs $c$ are uniform across contexts and actions the bound simplifies to $c \parens*{\frac{8\sigma^2 |X| |A|}{\epsilon^2}\log\frac{4|X||A|}{\delta} + n_{extra}}$.

\begin{theorem}[Necessary Cost Bounds in the Tabular Case]\label{thm:lower_cost_bounds_uniform_tabular}
Suppose costs are uniform across contexts and actions, $c(x, a_k) = c, ~\forall x, k$, that the context distribution is uniform so that $\P(X=x) = 1/|X|$, and that the utility function does not consider fairness ($\lambda_g=0$). Then the expected cost of any experimental strategy which learns an $\epsilon$-optimal policy in the sense that $\P(U(\pistar) - U(\pihat) < \epsilon) > 1- \delta$ is at least
$O\parens*{\frac{c|X||A|}{\epsilon^2} \log{\frac{|X|}{\delta}}}$.
\end{theorem}

\begin{proof}{Proof.}
We consider an extension of the hard bandit case considered by \citet{bandit_complexity_lower_bound} in the proof of their Theorem~1. \citet{bandit_complexity_lower_bound} show that to ensure that the right action is chosen with probability at least $1-\delta$ after stopping once $N_{x k}$ samples are taken for every context and action in the hard setting when the reward gaps for each context are of size $\epsilon$, the expected number of samples needed for each context and action is $\Expdist{0}{N_{x, k}} \geq O(1/\epsilon^2 \log{1/\delta})$. 

Consider the environments,
\begin{align}
    H_0: 
        r_{x, a_0} = \frac{1}{2} + \frac{3\epsilon}{2},
        \quad r_{x, a_{k \neq 0}} = \frac{1}{2}, ~\forall x
\end{align}
\begin{align}
    H_k: 
    r_{x, a_0} = \frac{1}{2} + \frac{3\epsilon}{2}, 
    \quad r_{x, a_k} = \frac{1}{2} + 3\epsilon, 
    \quad r_{x, a_{k' \neq 0, k}} = \frac{1}{2} , ~\forall x.
\end{align}

In order to figure out which hypothesis is true and ensure that $\P(U(\pistar) - U(\pihat) < \epsilon) > 1- \delta$ the policy cannot choose the wrong action in more than $2/3$ of the contexts. Thus the total expected cost must be at least 
\begin{align}
    \Expdist{0}{C} = \Expdist{0}{\sum_{k, 2/3 \text{ of contexts } x} c N_{x k}} \geq c\frac{2|X||A|}{3}O\parens*{\frac{4}{9\epsilon^2} \log{2|X|/3\delta}} = O\parens*{\frac{c|X||A|}{\epsilon^2} \log{\frac{|X|}{\delta}}}
\end{align}
where we've applied a union bound to ensure that the right action is learned in \textit{all} of the necessary contexts with probability at least $1-\delta$. Q.E.D.
\end{proof}

Note that in more general settings when the costs and context are not uniform,  cost-efficient sampling is more complicated. For example, it may be most cost effective to prioritize data collection from frequently occurring  contexts with large estimated differences in reward outcomes across actions (and thus with high impact on utility estimation), as well as potential informative context with low sampling costs. When the utility function includes fairness, it is even more difficult to reason about where to prioritize sampling effort, because focusing on the contexts with the highest probability and lowest cost may result in disparities across groups.

\section{Experiment Details for Section~\ref{sec:simulation}}
\label{appx:exp_details}

We define a subpopulation of clients for our simulation from case data at the Santa Clara Public Defender Office according to the following process. 
First, we restrict our population to clients with recorded court dates between January 1, 2010 and November 15, 2021,
who live within 20 miles of the courthouse.
Next, we limit our population to individuals who have stated that their race is white, or that their ethnicity is Vietnamese, or those who have stated that Vietnamese is their preferred language.
We limit to these demographic groups to reflect the motivating example from Section~\ref{sec:motivation}.
Finally, for consistency across case types and differences between court proceedings, we select only the first post-arraignment appearance for all individuals.

Next, we calculate a feature set $x$ for each case describing:
(1) whether the client identifies as Vietnamese;
(2) whether the case is a felony;
(3) whether the client identifies as male;
(4) the client's age;
(5) the natural log of the distance, in miles, between the client's home address and the courthouse (we normalize distance by dividing by 20 miles---the maximum allowable distance---so that the normalized distance is negative, with values of higher magnitude being closer to the courthouse);
(6) the number of known failures to appear in the past two years; and
(7) the inverse number of required court appearances in the past two years.
We further restrict the population to only cases which have complete information on all the above attributes.
The above process results in 12,646 example cases for use in our simulation.

With this information, we model the likelihood a client will appear in court with a logistic regression trained on the above historical population using the stated feature set,
with $\beta$ representing the vector of coefficients corresponding to each feature.
Specifically, we have:
\begin{equation*}
\Pr(Y(0)=1) = \text{logit}^{-1}(X \beta).
\end{equation*}
We modify our simulated population so that the overall population appearance rate is just above 50\%, 
and the appearance rate for Vietnamese clients is just above 70\%.
To do this, we use the model above with the fitted coefficients, but make two modifications. 
First, we change the baseline appearance rate by adjusting the model intercept.
In the historical data we use, appearance rates hover around 90\%, 
which is unusually high compared to other jurisdictions.
For example, in \citet{fishbane2020behavioral}, 
which studied court date attendance in New York City, 
appearance rates were approximately 60\%.
To roughly match this appearance rate,
and increase the potential treatment effect of rides and transit vouchers, 
we set the intercept in our simulation to zero, 
which results in an overall population mean appearance rate of roughly 51\%.
Next, we increase the coefficient for Vietnamese clients from 0.3 in the empirical data to 1 in our simulation,
which sets the average appearance rate for Vietnamese clients at 71\%.
This change---in addition to the fact that Vietnamese individuals tend to live farther away from court---magnifies the tradeoff in spending impact between Vietnamese and white individuals in our simulation, given that treatment effects would tend to be lower, and costs higher, for Vietnamese clients.
These plausible values were chosen so that a reasonable preference for spending parity would be in tension with simply allocating assistance to those with the highest treatment effect per dollar.

The predicted appearance probabilities from the model described above serve as the base for our simulation's structural equation model.
To begin, we define three potential outcomes for each individual, corresponding to 
appearance in the absence of assistance ($k=0$, i.e., that predicted by the above model), 
appearance if provided with rideshare assistance ($k=1$), 
and appearance if provided a transit voucher ($k=2$).
We do so in terms of the following structural equation:
\begin{equation*}
\begin{aligned}
f_Y(k, x, u) = \mathbbm{1}(u \leq \ \text{logit}^{-1}(  x\hat{\beta} ~+ 
 \gamma_1 \cdot \mathbbm{1}(k=1) ~+ 
 \gamma_2 \cdot \mathbbm{1}(k=2) \cdot x_\text{dist} 
))
\end{aligned}
\end{equation*}
where we set $\gamma_1$ to 4 and $\gamma_2$ to -0.75.
Finally, for a latent variable $U_Y \sim \textsc{Unif}(0,1)$, we define the potential outcomes:
\begin{equation*}
    Y(a) = f_Y(a, X, U_Y).
\end{equation*}
This structure ensures that $Y(0) \leq Y(1)$ and that $Y(0) \leq Y(2)$, meaning that receiving any form of assistance is always better than no assistance.
Further, the type of assistance---transit voucher or rideshare assistance---that is best for each individual varies across the population.

As described in the main text, the utility $U$ is defined by Eq.~\eqref{eq:utility}, where we set $r(x, a, y) = y$, $f(x,a,y) = c(x,a)$, and $\lambda_g = 0.0006$.
In other words,
\begin{align*}
U(\pi) &= \EE[Y(\pi(X))] - \sum_{g \in \mathcal{G}} \lambda_g 
             \Big\lvert
             \EE_{X}[c(X, \pi(X)) \mid g \in s(X)] - \EE_{X}[c(X, \pi(X))] 
             \Big\rvert.
\end{align*}
The first term in $U$ is the expected number of clients that would show up under the policy $\pi$, 
and the second term captures our parity preferences.
The constant $\lambda_g$ was chosen 
to reflect a preference for trading off 
between appearance maximization and spending parity. 
For the $\varepsilon$-greedy model, we set $\varepsilon = 0.1$. 
For both UCB and Thompson sampling, we use the default 
weakly informative priors provided by the \texttt{sim} function in \texttt{arm}~\citep{armpackage}.
For UCB, we used the 97.5th percentile estimate of the posterior. 

When estimating $\EE[Y(\pi(a)) \mid X = x]$ during policy learning, we use a logistic regression with the same functional form as the data-generating process above.
We started each of our experiments with a randomly selected warm-up group of 4 people, with at least one male and at least one Vietnamese client. 
During this period, the first two clients were assigned to actions $k=1$ and $k=2$, respectively. 
The other two individuals were assigned to control, i.e., $k=0$.
The treatments during this warm-up period are not included as expenditures against the overall budget $b$.

For our simulation, we set an average per-person budget of \$5.
We also set rideshare costs at \$5 per mile, 
and daily transit voucher costs at \$7.50, reflecting typical prices observed in Santa Clara county.
Because our inferred policies $\pi_i^*$ evolve over time, they are not guaranteed to adhere to the budget constraints.
To account for this possibility,
if we find ourselves spending more on an action than is budgeted, we gradually lower the nominal budget for that action until it meets the target budget (and vice versa for underspending).
Specifically, for each iteration $i$, we compute a new budget $b_{i}^*$:
\begin{equation}
\label{eq:budget_adjustment}
\begin{aligned}
b_{i}^{*} =   b \cdot \frac{b \cdot (i - 1)}{\sum_{j=1}^{i-1} c(x_j, A_j)},
\end{aligned}
\end{equation}
where $A_j$ is the action taken on the 
$j$-th individual,
and $b$ is the target budget.
In Figure~\ref{fig:budget_adherence}, we show that, in expectation, all approaches included in our simulation spend the allowed budget.
In Figure~\ref{fig:cumulative_spending_variation}, we show that across simulations which used RA, UCB, Thompson sampling, or an $\epsilon$-greedy approach, we spent an amount reasonably close to the intended budget.

\begin{figure}[t!]
\begin{center}
\centerline{\includegraphics[width=0.65\columnwidth]{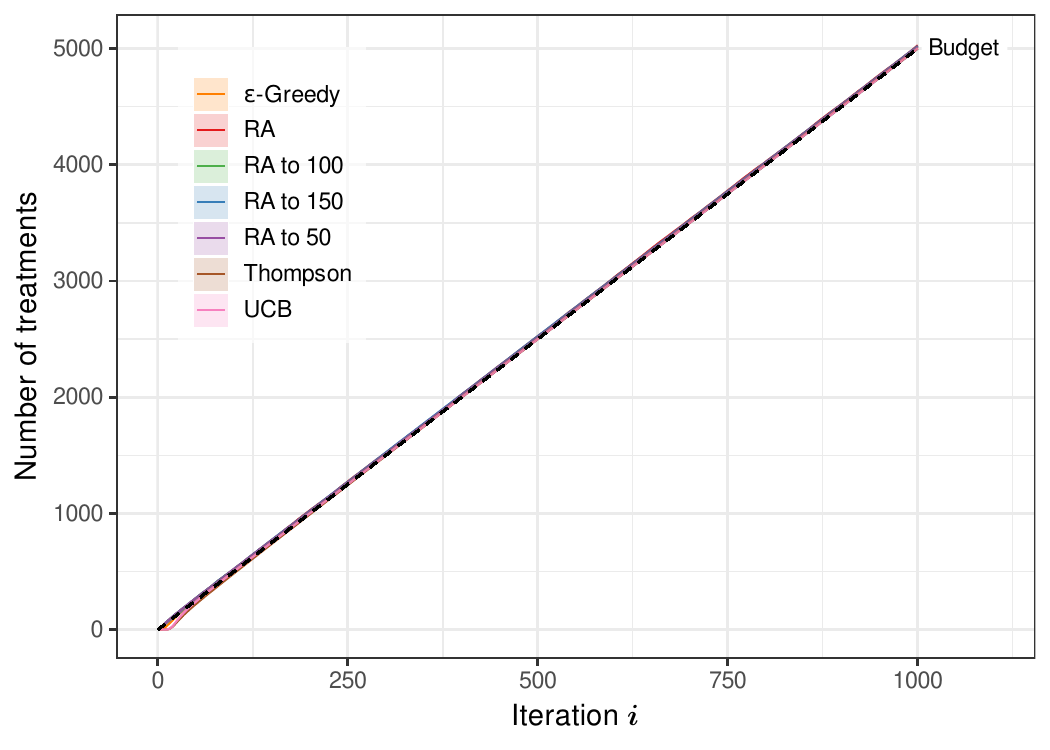}}
\caption{
Mean spending by method across 2,000 simulations. 
The budget is illustrated with a dashed line.
}
\label{fig:budget_adherence}
\end{center}
\end{figure}

\begin{figure}[t!]
\begin{center}
\centerline{\includegraphics[width=0.65\columnwidth]{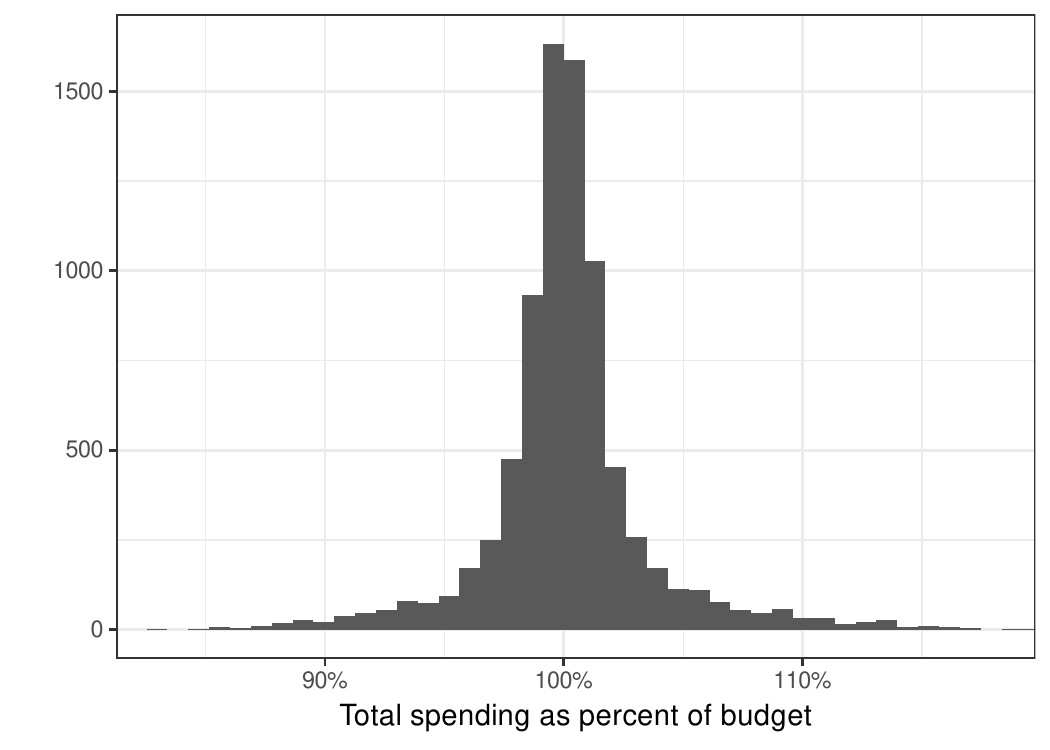}}
\caption{
Distribution of total expenditures, compared to the intended budget,
across experiments which used RA, UCB, Thompson sampling, or an $\epsilon$-greedy approach
to allocate treatments.
80\% of runs spent a total amount under 101.5\% of the total budget,
and 95\% of runs spent a total amount under 105\% of the intended budget.
}
\label{fig:cumulative_spending_variation}
\end{center}
\end{figure}

For our RA and $\epsilon$-greedy approaches, care must be taken when selecting actions given varying costs and an overall per-person budget.
For example, an RA approach that selects all available treatments with equal probability could overshoot the budget if rides cost \$100 on average and the per-person budget is \$5.
To avoid this outcome, we first calculate the expected cost of all actions---including both costly and no-cost actions---when following random allocation, 
and then calculate the proportion $p$ of individuals to whom we can afford to randomly assign an action:
\begin{equation}
\label{eq:random_allocation}
p = \frac{b}{\tilde{c}}, \hspace{0.25cm} \text{where}~\tilde{c} = \frac{1}{k}\sum_k\EE_X[c(X, a_k)].
\end{equation}
Once calculated, we randomly select a proportion $p$ of the population to receive a random action, 
and treat the remainder of the population $1-p$ with the no-cost treatment, 
which ensures we meet our budget in expectation.

Our optimization procedure (i.e., our linear program) formally relies on having a discrete covariate space, but our synthetic population has two continuous covariates: 
the client's age and
distance from the courthouse.
Treating covariates as discrete---for example, by binning all covariates to limit the number of possible values---would still require us to learn a policy across an infeasibly large number of possible values.
For example, if we bin all covariates so that each covariate is represented by no more than ten bins,
we would still need to learn a policy across up to 80,000 possible combinations of covariate values.
To address this issue, we transfer our continuous setting to the discrete setting in two steps.
First, at the start of our experiments, we draw one random sample $\mathcal{C}$ of $n=1{,}000$ clients, and approximate the full population by a discrete distribution over this observed sample, with each client assigned probability $1/n$.
Now, the policies we construct (i.e., those produced by our LP) are technically defined only for individuals having covariates matching those of a client in the initial sample $\mathcal{C}$. 
Consequently, when making decisions for a new individual with covariates $x$, we act according to the learned policy for the most similar client in $\mathcal{C}$, among those having the same group membership $s(x)$ as the new client.
Specifically, let $\hat{h}(x,k)$ be our estimate of
$\EE[Y(\pi(a)) \mid X = x]$.
Then,
for a new client, we define its nearest neighbor $\textsc{NN}(x)$ to be:
\begin{equation*}
    \textsc{NN}(x) = 
    \argmin_{\genfrac..{0pt}{3}{x' \in \mathcal{C}}{s(x') = s(x)}}
    \Bigg\lVert \frac{\hat{h}(x', \cdot)}{c(x', .)} - \frac{\hat{h}(x, \cdot)}{c(x, .)} \Bigg\rVert_2.
\end{equation*}
Then, for any policy $\pi$ defined on $\mathcal{C}$, we extend it to a policy $\tilde{\pi}$ on the full population by setting $\tilde{\pi}(x) = \pi(\textsc{NN}(x))$.

This procedure is insensitive to the number of randomly sampled clients in $\mathcal{C}$.
We compared cumulative observed utility 
when randomly sampling 1,000 clients for $\mathcal{C}$ against cumulative observed utility when randomly sampling 500 and 2,000 clients.
Across 400 simulations of the UCB approach, 
the cumulative observed utility at the end of the experiment
for both the 500 and 2,000 client variations
was no more than 0.03\% different 
than the cumulative observed utility when $\mathcal{C}$ was composed of 1,000 clients.

To construct Figure~\ref{fig:spending_variations}, 
we conducted a grid search over different possible random assignments,
varying allocations by 10\% at a time, 
allowing allocations for each costly arm to vary from 0\% to 100\%.
We also explored random assignments 
with a lower proportion of costly treatments,
varying allocations by 2\% at a time and
allowing allocations for costly arms to vary from 2\% to 8\%.
We ran 125 simulations of each allocation variation.
Aside from the allocation variations set by grid search, 
all experiment parameters were otherwise identical to those described in the main simulation 
(e.g., each experiment included 1,000 clients).

\end{document}